\newtheorem{proposition}{Proposition}
\newcommand{\vv}{{\boldsymbol v}}
\newcommand{\uu}{{\boldsymbol u}}
\newcommand{\pp}{{\boldsymbol p}}
\newcommand{\ii}{{\mathbf{i}}}
\newcommand{\jj}{{\mathbf{j}}}
\newcommand{\kk}{{\mathbf{k}}}
\newcommand{\norm}[1]{\left\lVert#1\right\rVert_2}
\newcites{supp}{References}
\ifcvprfinal\pagestyle{empty}\fi
\begin{document}

\title{Quaternion Product Units for Deep Learning on 3D Rotation Groups}

\author[1]{Xuan Zhang}
\author[1]{Shaofei Qin}
\author[1]{Yi Xu\thanks{The corresponding author of this paper is Yi Xu (xuyi@sjtu.edu.cn). This work was supported in part by NSFC 61671298, STCSM 18DZ2270700, and 111 Plan B07022.}}
\author[2]{Hongteng Xu}
\affil[ ]{${}^{\text{1}}$MoE Key Lab of Artificial Intelligence, AI Institute \quad ${}^{\text{2}}$Infinia ML, Inc.}
\affil[ ]{${}^{\text{1}}$Shanghai Jiao Tong University, Shanghai, China \quad ${}^{\text{2}}$Duke University, Durham, NC, USA}
\affil[ ]{{\tt\small{\{floatlazer, 1105042987, xuyi\}@sjtu.edu.cn}, hongteng.xu@duke.edu}}


\maketitle
\thispagestyle{empty} 

\begin{abstract}
We propose a novel quaternion product unit (QPU) to represent data on 3D rotation groups. 
The QPU leverages quaternion algebra and the law of 3D rotation group, representing 3D rotation data as quaternions and merging them via a weighted chain of Hamilton products. 
We prove that the representations derived by the proposed QPU can be disentangled into ``rotation-invariant'' features and ``rotation-equivariant'' features, respectively, which supports the rationality and the efficiency of the QPU in theory. 
We design quaternion neural networks based on our QPUs and make our models compatible with existing deep learning models. 
Experiments on both synthetic and real-world data show that the proposed QPU is beneficial for the learning tasks requiring rotation robustness.
\end{abstract}

\section{Introduction}
Representing 3D data like point clouds and skeletons is essential for many real-world applications, such as autonomous driving~\cite{qi2018frustum,zhou2018voxelnet}, robotics~\cite{whelan2015elasticfusion}, and gaming~\cite{bloom2012g3d,pavllo2018quaternet}.
In practice, we often model these 3D data as the collection of points on a 3D rotation group $\mathbb{SO}(3)$, $e.g.$, a skeleton can be represented by the rotations between adjacent joints. 
Accordingly, the uncertainty hidden in these data is usually caused by the randomness on rotations. 
For example, in action recognition,
the human skeletons with different orientations may represent the same action~\cite{vemulapalli2014human,zhang2017view}; in autonomous driving, the point clouds with different directions may capture the same vehicle~\cite{zhou2018voxelnet}.  
Facing to the 3D data with such rotation discrepancies, we often require the corresponding representation methods to be robust to rotations.

\begin{figure*}[t]
    \centering
    \begin{subfigure}[b]{0.45\linewidth}
        \includegraphics[width=1\linewidth]{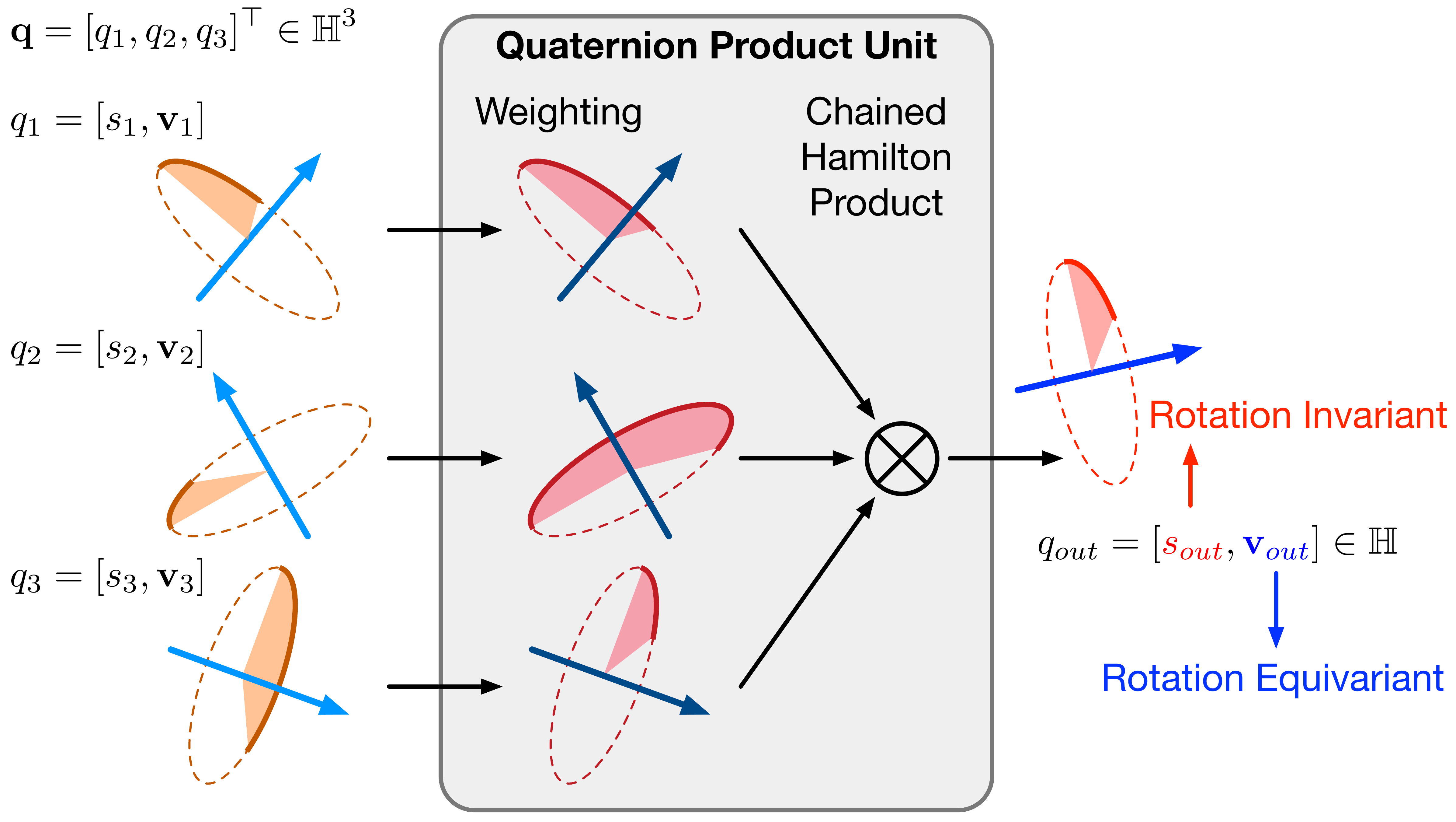}
        \caption{A single quaternion product unit}\label{fig:qpu}
    \end{subfigure}\quad
    \begin{subfigure}[b]{0.45\linewidth}
        \includegraphics[width=1\linewidth]{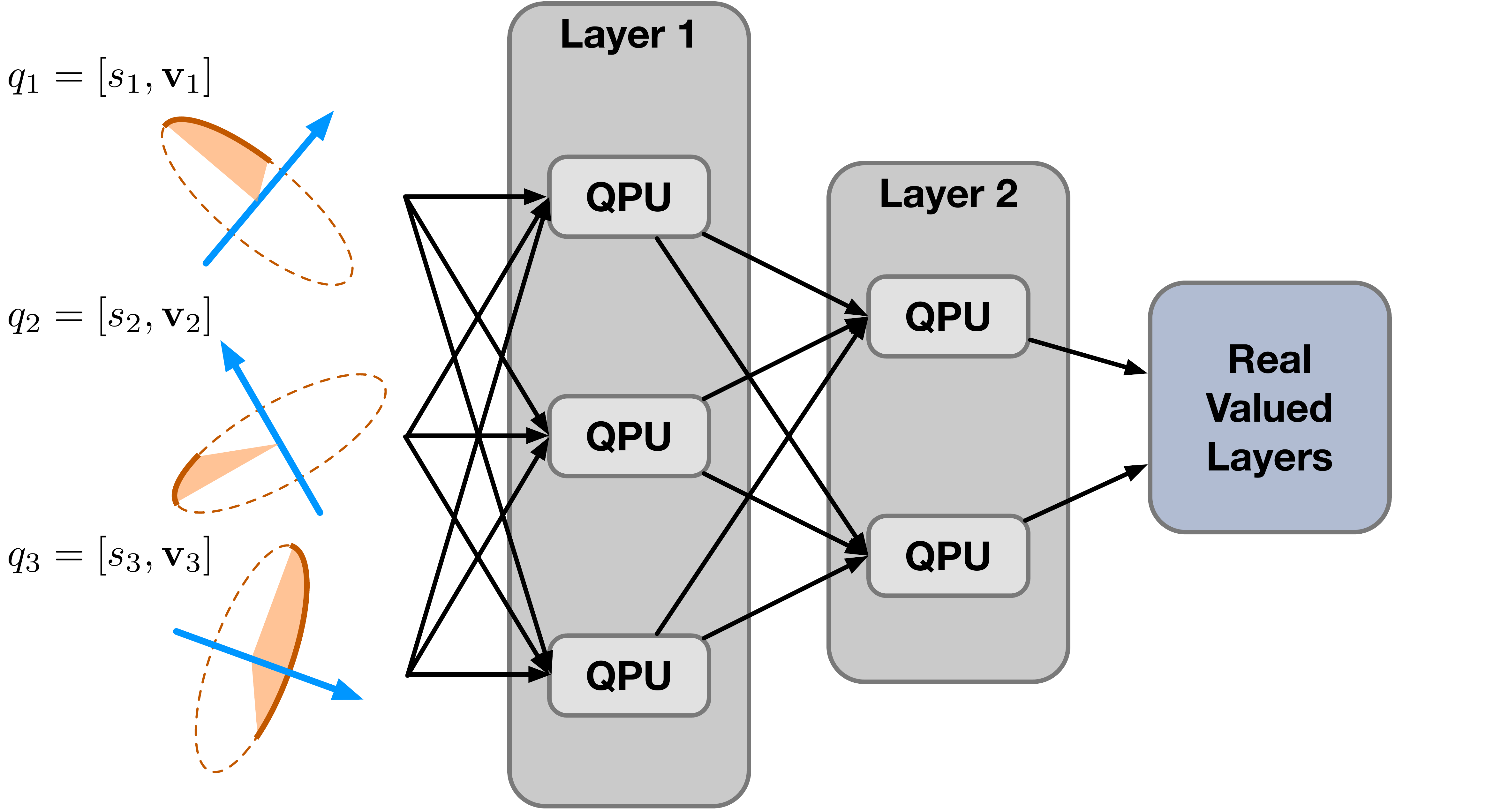}
        \caption{A QMLP with 2 QPU-based FC layers + real-valued layers}\label{fig:qmlp}
    \end{subfigure}
    \caption{(a) An illustration of our QPU model. 
    Each rotation in $\mathbb{SO}(3)$ can be represented via a unit quaternion in a hypercomplex space $\mathbb{H}$, whose real part and imaginary part indicates the rotation angle and the direction of the rotation axis, respectively.
    The QPU merges input rotations via a weighted chain of Hamilton products and derives a rotation as its output, whose real part and imaginary part yield to the rotation-invariance and the rotation-equivariance, respectively.
    (b) We can further construct a fully-connection (FC) layer using multiple QPUs. The stacking of these QPU-based FC layers leads to a quaternion multi-layer perceptron (QMLP) model, and this model can be followed by existing real-valued layers.}
\end{figure*}

Currently, many deep learning-based representation methods have made efforts to enhance their robustness to rotations~\cite{cohen2018spherical,esteves2018learning,weiler20183d,chen2019clusternet,thomas2018tensor}. 
However, the architectures of their models are tailored for the data in the Euclidean space, rather than 3D rotation data. 
In particular, the 3D rotation data are not closed under the algebraic operations used in their models ($i.e.$, additions and multiplications).\footnote{For example, adding two 3D rotation matrices together or multiplying a scalar with a 3D rotation matrix will not result in a valid rotation matrix.} 
The mismatching between data and model makes these methods difficult to analyze the influence of rotations on their outputs quantitatively. 
Although this mismatching problem can be mitigated by augmenting training data with additional rotations~\cite{qi2017pointnet}, this solution gives us no theoretical guarantee.

To overcome the challenge mentioned above, we proposed a novel quaternion product unit (QPU) for 3D rotation data, which establishes an efficient and interpretable mechanism to enhance the robustness of deep learning models to rotations. 
As illustrated in Figure~\ref{fig:qpu}, for each 3D rotation, we represent it as a unit quaternion, whose imaginary part indicates the direction of its rotation axis, and the real part corresponds to the cosine of its rotation angle, respectively. 
Taking $N$ quaternions as its inputs, the QPU first applies quaternion power operation to each input, scaling their rotation angles and rotation axes, respectively. 
Then, it applies a chain of Hamilton products to merge the weighted quaternions and output a rotation accordingly. 
The parameters of the QPU consists of the scaling coefficients and the bias introduced to the rotation angles. 

The proposed QPU leverages quaternion algebra and the law of 3D rotation groups, merging the inputs as one quaternion. 
Because 3D rotation group is closed under the operations used in our QPU, the output is still a valid rotation. 
Moreover, given a dataset $\mathcal{X}\subset\mathbb{SO}(3)^N$, where each $\bm{x}=[x_1, ..., x_N]\in\mathcal{X}$ contains $N$ rotations, we can define two kinds of rotation robustness for a mapping function $f:~\mathbb{SO}(3)^N\mapsto\mathbb{SO}(3)$:

\textbf{Rotation-invariance} $f(R(\bm{x})) = f(\bm{x}),~\forall\bm{x}\in\mathcal{X}$.

\textbf{Rotation-equivariance} $f(R(\bm{x})) = R(f(\bm{x})),\forall\bm{x}\in\mathcal{X}$.

\noindent Here, $R$ is an arbitrary rotation operation and $R(\bm{x})=[R(x_1),...,R(x_N)]$.
We prove that the quaternion derived by QPU is disentangled naturally into a \emph{rotation-invariant} real part and a \emph{rotation-equivariant} imaginary part. 

The proposed QPU is highly compatible with existing deep learning models. 
As shown in Figure~\ref{fig:qmlp}, we build fully-connected (FC) layers based on QPUs and stack them as a quaternion multi-layer perceptron model (QMLP). 
We also design a quaternion graph convolutional layer based on QPUs. 
These QPU-based layers can be combined with standard real-valued layers. 
Experiments on both synthetic and real-world data show that the proposed QPU-based models can be trained efficiently with standard back-propagation and outperform traditional real-valued models in many applications requiring rotation robustness.

\section{Proposed Quaternion Product Units}
Many 3D data can be represented as points on 3D rotation groups.
Take 3D skeleton data as an example.
A 3D skeleton is a 3D point cloud associated with a graph structure.
We can represent a skeleton by a set of 3D rotations, using relative rotations between edges ($i.e.$, joint rotations). 
Mathematically, the 3D rotation from a vector $\vv_1 \in \mathbb{R}^3$ to another vector $\vv_2 \in \mathbb{R}^3$ could be described as a rotation around an axis $\uu$ with a angle $\theta$:
\begin{eqnarray}\label{eq:aa}
\begin{aligned}
&\text{Rotation Axis:}~\uu = \frac{\vv_1 \times \vv_2}{\|\vv_1 \times \vv_2\|_2},\\
&\text{Rotation Angle:}~\theta = \arccos(\frac{\langle \vv_1, \vv_2 \rangle}{\|\vv_1\|_2\|\vv_2\|_2}).
\end{aligned}
\end{eqnarray}
Here, $\vv_1 \times \vv_2$ stands for cross product and $\langle \vv_1, \vv_2\rangle$ is for inner product. 
Given such 3D rotation data, we would like to design a representation method yielding to a certain kind of rotation robustness. 
With the help of quaternion algebra, we propose a quaternion product unit to achieve this aim.

\subsection{Quaternion and 3D rotation}
Quaternion $q = s + \ii x + \jj y + \kk z \in \mathbb{H}$ is a type of hypercomplex number with 1D real part $s$ and 3D imaginary part $(x, y, z)$.
The imaginary parts satisfy $\ii ^2 = \jj ^2 = \kk ^2 = \ii \jj \kk = -1, \ii \jj = -\jj \ii = \kk, \kk \ii = -\ii \kk = \jj, \jj \kk = -\kk \jj = \ii$. 
For convenience, we ignore the imaginary symbols and represent a quaternion as the combination of a scalar and a vector as $q = [s, \vv] = [s, (x, y, z)]$. 
When equipped with Hamilton product as multiplication, noted as $\otimes$, and standard vector addition, the quaternion forms an algebra.
In particular, the Hamilton product between two quaternions $q_1 = [s_1, \vv_1]$ and $q_2 = [s_2, \vv_2]$ is defined as
\begin{eqnarray}\label{eq:qmul_v}
\begin{aligned}
q_1 \otimes q_2 &= [s_1 s_2 -  \langle \vv_1, \vv_2 \rangle, \vv_1 \times \vv_2 + s_1 \vv_2 + s_2 \vv_1]\\
    &= M_L(q_1) q_2 = M_R(q_2) q_1,
\end{aligned}
\end{eqnarray}
where for a quaternion $q=[s, (x, y, z)]$, we have
\begin{eqnarray}
M_L(q) = \left[ \begin{smallmatrix}s & -x & -y & -z \\ x & s & -z & y \\ y & z & s & -x \\ z & -y & x & s \end{smallmatrix} \right],
    M_R(q) = \left[ \begin{smallmatrix}s & -x & -y & -z \\ x & s & z & -y \\ y & -z & s & x \\ z & y & -x & s \end{smallmatrix} \right].
\end{eqnarray}
The second row in Eq.~(\ref{eq:qmul_v}) uses matrix-vector multiplications, in which we treat one quaternion as a 4D vector and compute a matrix based on the other quaternion.
Note that the Hamilton product is non-commutative --- multiplying a quaternion on the left or the right gives different matrices.

Quaternion algebra provides us with a new representation of 3D rotations. 
Specifically, suppose that we rotate a 3D vector $\vv_1$ to another 3D vector $\vv_2$, and the 3D rotation is with an axis $\uu$ and an angle $\theta$ shown in Eq.~(\ref{eq:aa}). 
We can represent the 3D rotation as a unit quaternion $q=[s,\vv]=[\cos(\frac{\theta}{2}), \sin(\frac{\theta}{2})\uu]$, where $\|\uu\|_2=1$ and $s^2 +\|\vv\|_2^2=\cos^2(\frac{\theta}{2})+\sin^2(\frac{\theta}{2})=1$. 
Representing the two 3D vectors as two pure quaternions, $i.e.$, $[0, \vv_1]$ and $[0,\vv_2]$, we can achieve the rotation from $\vv_1$ to $\vv_2$ via the Hamilton products of the corresponding quaternions:
\begin{equation}
[0, \vv_2] = q \otimes [0, \vv_1] \otimes q^{*},
\end{equation}
where $q^{*}=[s,-\vv]$ stands for the conjugation of $q$. 
Additionally, the combination of rotation matrices can also be represented by the Hamilton products of unit quaternions. 
For example, given two unit quaternions $q_1$ and $q_2$, which correspond to two rotations, $(q_2\otimes q_1)\otimes [0, \vv_1]\otimes (q_1^* \otimes q_2^*)$ means rotating $\vv_1$ sequentially through the two rotations.

Note that unit quaternion is a double cover of $\mathbb{SO}(3)$ since $q$ and $-q$ represents the same 3D rotation (in opposite directions). 
As shown in Eq.~(\ref{eq:aa}), we use inner product and $\arccos$ to generate unit quaternion and reduce this ambiguity by choosing the quaternion with positive real part. 
Refer to~\cite{dam1998quaternions} for more details on unit quaternion and 3D rotation.

\subsection{Quaternion product units}
In standard deep learning models, each of their neurons can be represented as a weighted summation unit, $i.e.$, $y =  \sigma(\sum_{i=1}^{N} w_i x_i + b)$, where $\{w_i\}_{i=1}^{N}$ and $b$ are learnable parameters, $\{x_i\}_{i=1}^{N}$ are inputs, and $\sigma(\cdot)$ is a nonlinear activation function. 
As aforementioned, when the input $x_i\in\mathbb{SO}(3)$, such a unit cannot keep the output $y$ on $\mathbb{SO}(3)$ as well. 
To design a computational unit guaranteeing the closure of $\mathbb{SO}(3)$, we propose an alternate of this unit based on the quaternion algebra introduced above.

Specifically, given $N$ unit quaternions $\{q_i=[s_i, \vv_i]\}_{i=1}^{N}$ that represent 3D rotations, we can define a weighted chain of Hamilton products as 
\begin{equation}\label{eq:chain}
    y = \sideset{}{_{i=1}^{N}}\bigotimes q_i ^ {w_i} = q_{1}^{w_1} \otimes q_{2}^{w_2} \otimes ... \otimes q_{N}^{w_N},
\end{equation}
where the power of a quaternion $q=[s,\vv]$ with a scalar $w$ is defined as~\cite{dam1998quaternions}:
\begin{eqnarray*}\label{qpow2}
\begin{aligned}
   q ^ w := 
   \begin{cases}
   [\cos(w \arccos(s)), \frac{\vv}{\|\vv\|_2} \sin(w \arccos(s))],\quad\vv\neq\bm{0},\\
   [1, \bm{0}],\quad\text{otherwise}.
   \end{cases}
\end{aligned}
\end{eqnarray*}
Note that the power of a quaternion only scales the rotation angle and does not change the rotation axis.

Here, we replace the weighted summation with a weighted chain of Hamilton products, which makes $\mathbb{SO}(3)$ closed under the operation. 
Based on the operation defined in Eq.~(\ref{eq:chain}), we proposed our quaternion product unit: 
\begin{eqnarray}\label{eq:qpu_eq}
\begin{aligned}
\text{QPU}(\{q_i\}_{i=1}^{N}; \{w_i\}_{i=1}^{N},b) =\sideset{}{_{i=1}^{N}}\bigotimes \text{qpow}(q_i; w_i, b),
\end{aligned}
\end{eqnarray}
where for $q_i=[s_i,\vv_i]$
\begin{eqnarray*}
\begin{aligned}
&\text{qpow}(q_i; w_i, b)\\
=&[\cos(w_i (\arccos(s_i) + b)), \frac{\vv_i}{\|\vv_i\|_2} \sin(w_i (\arccos(s_i) + b))]
\end{aligned}
\end{eqnarray*} 
represents the weighting function on the rotation angles with a weight $w_i$ and a bias $b$. 

Compared with Eq.~(\ref{eq:chain}), we add a bias to $b$ to shift the origin.
The output of $\arccos$ contains an infinite gradient at $\pm 1$.
We solve this problem by clamping the input scalar part $s$ between $-1+\epsilon$ and $1-\epsilon$, where $\epsilon$ is a small number.

\subsection{Rotation-invariance and equivariance of QPU}
The following proposition demonstrates the advantage of our QPU on achieving rotation robustness.
\begin{proposition}
The output of the QPU is a quaternion containing a rotation-invariant real part and a rotation-equivariant imaginary part.
\end{proposition}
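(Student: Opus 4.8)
The plan is to show that applying a global rotation to the inputs \emph{conjugates} the QPU's output quaternion, and then to read off rotation-invariance and rotation-equivariance from the scalar/vector split of that conjugation. Fix an arbitrary 3D rotation $R$ and let $r=[s_r,\vv_r]$ be a unit quaternion representing it. Since each input $q_i$ encodes the relative rotation taking one edge vector to an adjacent one, rotating the whole configuration by $R$ sends every edge vector $\vv$ to $R\vv$; combining this with $[0,R\vv]=r\otimes[0,\vv]\otimes r^{*}$ for pure quaternions yields, after one line, that the relative rotation transforms as $R(q_i)=r\otimes q_i\otimes r^{*}$ (up to the double-cover sign, addressed below). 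So the whole task reduces to proving $\text{QPU}(\{r\otimes q_i\otimes r^{*}\};\{w_i\},b)=r\otimes\text{QPU}(\{q_i\};\{w_i\},b)\otimes r^{*}$.

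To that end I would isolate two elementary facts. (i) \emph{qpow commutes with conjugation.} Conjugation by a unit quaternion is an inner automorphism of $\mathbb{H}$: it fixes the scalar part and acts on the vector part by the associated orthogonal matrix $R$, so in particular $\|R\vv_i\|_2=\|\vv_i\|_2$. Because $\text{qpow}(q_i;w_i,b)$ has scalar part $\cos(w_i(\arccos(s_i)+b))$ depending only on $s_i$, and vector part a scalar multiple — the multiplier $\sin(w_i(\arccos(s_i)+b))$ again depends only on $s_i$ — of the unit axis $\vv_i/\|\vv_i\|_2$, replacing $q_i$ by $r\otimes q_i\otimes r^{*}=[s_i,R\vv_i]$ leaves the scalar part untouched and rotates the axis to $R\vv_i/\|\vv_i\|_2 = R(\vv_i/\|\vv_i\|_2)$; hence $\text{qpow}(r\otimes q_i\otimes r^{*};w_i,b)=r\otimes\text{qpow}(q_i;w_i,b)\otimes r^{*}$. (ii) \emph{Conjugation distributes over the Hamilton-product chain.} For unit $r$ we have $r^{*}\otimes r=[1,\bm 0]$, so $(r\otimes a\otimes r^{*})\otimes(r\otimes b\otimes r^{*})=r\otimes(a\otimes b)\otimes r^{*}$, and an induction telescopes the conjugations through all $N$ factors of the product in Eq.~(\ref{eq:qpu_eq}).

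Chaining (i) and (ii) gives $\text{QPU}(R(\bm x))=r\otimes\text{QPU}(\bm x)\otimes r^{*}$. Writing $\text{QPU}(\bm x)=[s_y,\vv_y]$ and using that conjugation by $r$ sends $[s_y,\vv_y]$ to $[s_y,R\vv_y]$: the real part $s_y$ is unchanged, i.e. \textbf{rotation-invariant}, while the imaginary part obeys $\vv_y\mapsto R\vv_y$, i.e. it is \textbf{rotation-equivariant} (equivalently, the output rotation itself maps to its $R$-conjugate, and its scalar and vector components split exactly into the invariant and the equivariant part). Two bookkeeping remarks close the argument at no cost: clamping each $s_i$ into $[-1+\epsilon,1-\epsilon]$ commutes with conjugation because conjugation does not move scalar parts, and the $q$-versus-$(-q)$ ambiguity is harmless since conjugation preserves the sign of the real part, so the ``positive real part'' convention used to select the quaternion is consistent before and after rotation.

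The main obstacle here is conceptual rather than computational: one has to identify the correct action of a global rotation on these \emph{relative}-rotation data — it is conjugation $q_i\mapsto r\otimes q_i\otimes r^{*}$, not left multiplication $q_i\mapsto r\otimes q_i$ — and then check the qpow–conjugation commutation in step (i), which is precisely where norm-preservation $\|R\vv_i\|_2=\|\vv_i\|_2$ and the scalar-only dependence of the weighting function enter. Once those are in place, the telescoping in step (ii) and the final extraction of the invariant and equivariant components are routine.
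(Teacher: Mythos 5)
Your proof is correct, and it establishes the same statement the paper does, but the key lemma is proved by a genuinely different (and cleaner) mechanism. The paper works directly with the component formula for the Hamilton product: it applies $R$ to the vector parts of two factors and uses $\langle R\vv_1,R\vv_2\rangle=\langle\vv_1,\vv_2\rangle$ and $R\vv_1\times R\vv_2=R(\vv_1\times\vv_2)$ to conclude $[s_1,R\vv_1]\otimes[s_2,R\vv_2]=[s_o,R\vv_o]$, then asserts without further detail that ``the same property holds for the chain of weighted Hamilton products.'' You instead recognize that the action $[s,\vv]\mapsto[s,R\vv]$ is exactly conjugation $q\mapsto r\otimes q\otimes r^{*}$ by the unit quaternion $r$ representing $R$, so commutation with the product chain is the telescoping identity $(r\,a\,r^{*})\otimes(r\,b\,r^{*})=r\otimes(a\otimes b)\otimes r^{*}$ rather than a vector-identity computation. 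What your route buys: it makes explicit the step the paper glosses over, namely that $\mathrm{qpow}$ also commutes with the action (because the weighting depends only on the scalar part and rescales the unit axis, and $\|R\vv_i\|_2=\|\vv_i\|_2$), it handles the double-cover sign and the clamping, and it motivates why conjugation --- not left multiplication --- is the correct induced action on relative-rotation inputs. What the paper's route buys is immediacy: the cross-product/inner-product identities expose in one line exactly which term of the product is invariant and which is equivariant, without introducing the quaternion $r$ at all. Both arguments are complete modulo the same final induction over the chain.
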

\begin{proof}
This proposition is followed directly by the property of the Hamilton product. 
Given two quaternions $[s_1, \vv_1]$ and $[s_2, \vv_2]$, we apply a Hamilton product, $i.e.$, $[s_1, \vv_1]\otimes[s_2, \vv_1]=[s_o, \vv_o]$. 
Applying a rotation $R$ on the vector parts, we have
\begin{eqnarray}
\begin{aligned}
&[s_1, R(\vv_1)]\otimes[s_2, R(\vv_2)] \\
=&[s_1 s_2 - \langle R(\vv_1),R(\vv_2) \rangle,\\
&~R(\vv_1) \times R(\vv_2) + s_1 R(\vv_2) + s_2 R(\vv_1)].
\end{aligned}
\end{eqnarray}

Because $\langle R(\vv_1), R(\vv_2) \rangle = \langle \vv_1, \vv_2 \rangle$ and $R(\vv_1) \times R(\vv_2) = R(\vv_1 \times \vv_2)$, we have
\begin{eqnarray}
\begin{aligned}
&[s_1, R(\vv_1)]\otimes[s_2, R(\vv_2)]\\
=&[ s_1 s_2 - \langle \vv_1, \vv_2 \rangle,
    R(\vv_1 \times \vv_2 + s_1 \vv_2 + s_2 \vv_1)]\\
=&[s_o, R(\vv_o)].
\end{aligned}
\end{eqnarray}
Thus, the Hamilton product of two quaternions gives a rotation-invariant real part and a rotation-equivariant imaginary part. 
The same property holds for the chain of weighted Hamilton products. 
\end{proof}

The proof above indicates that the intrinsic property and the group law of $\mathbb{SO}(3)$ naturally provided rotation-invariance and rotation-equivariance. 
Without forced tricks or hand-crafted representation strategies, the principled design of QPU makes it flexible for a wide range of applications with different requirements on rotation robustness. 
Figure~\ref{fig:illu} further visualizes the property of our QPU. 
Given a QPU, we feed it with a human skeleton and its rotated version, respectively. 
We compare their outputs and find that the imaginary parts of their outputs inherit the rotation discrepancy between them while the real parts of their outputs are the same with each other.

\begin{figure}[t]
    \centering
    \begin{minipage}[b]{0.22\linewidth}
        \centering
        \begin{subfigure}[b]{0.84\linewidth}
            \includegraphics[height=1.5cm]{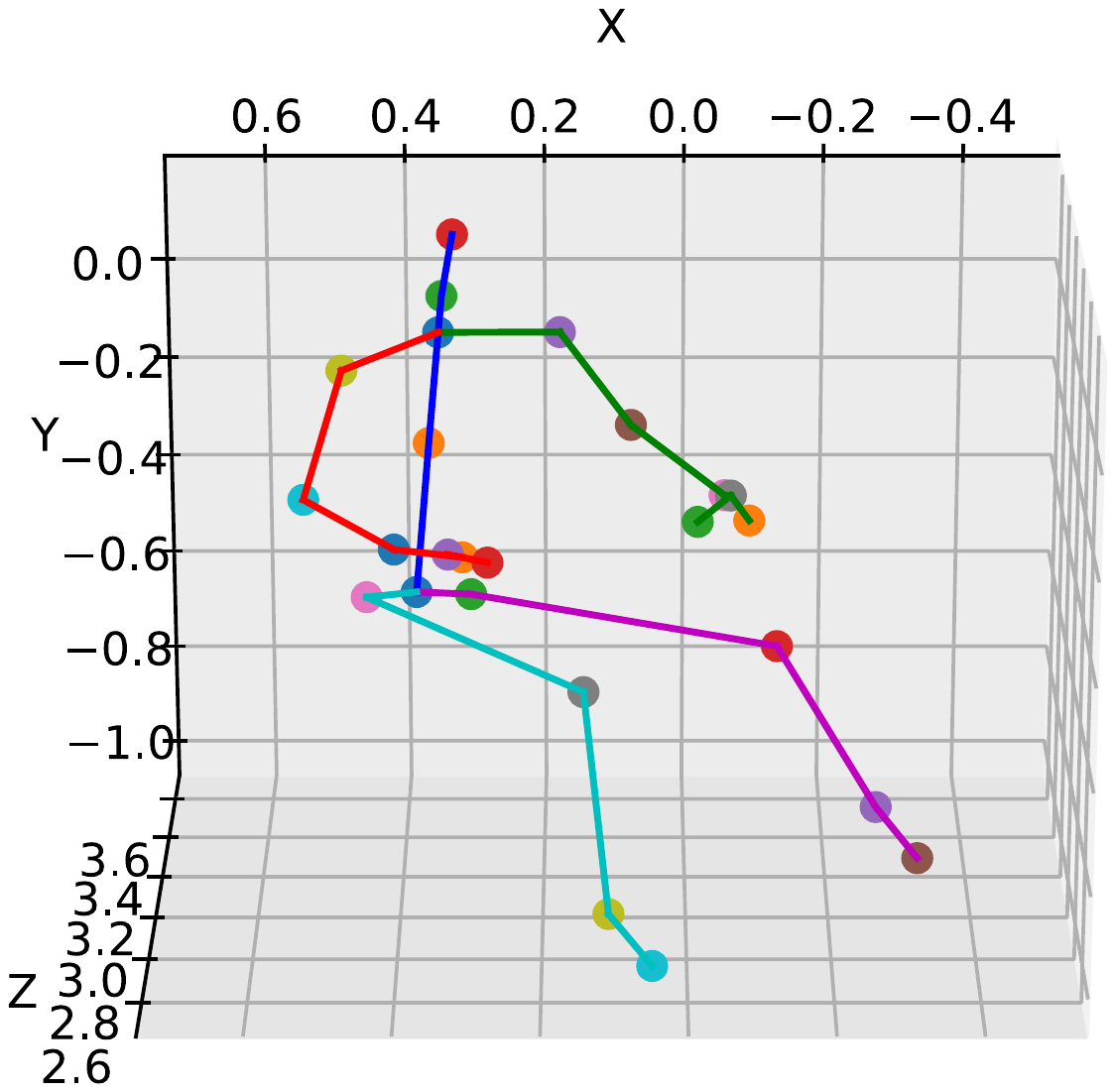}
            \caption{Original}\label{fig:in}
        \end{subfigure}
        \begin{subfigure}[b]{0.84\linewidth}
            \includegraphics[height=1.5cm]{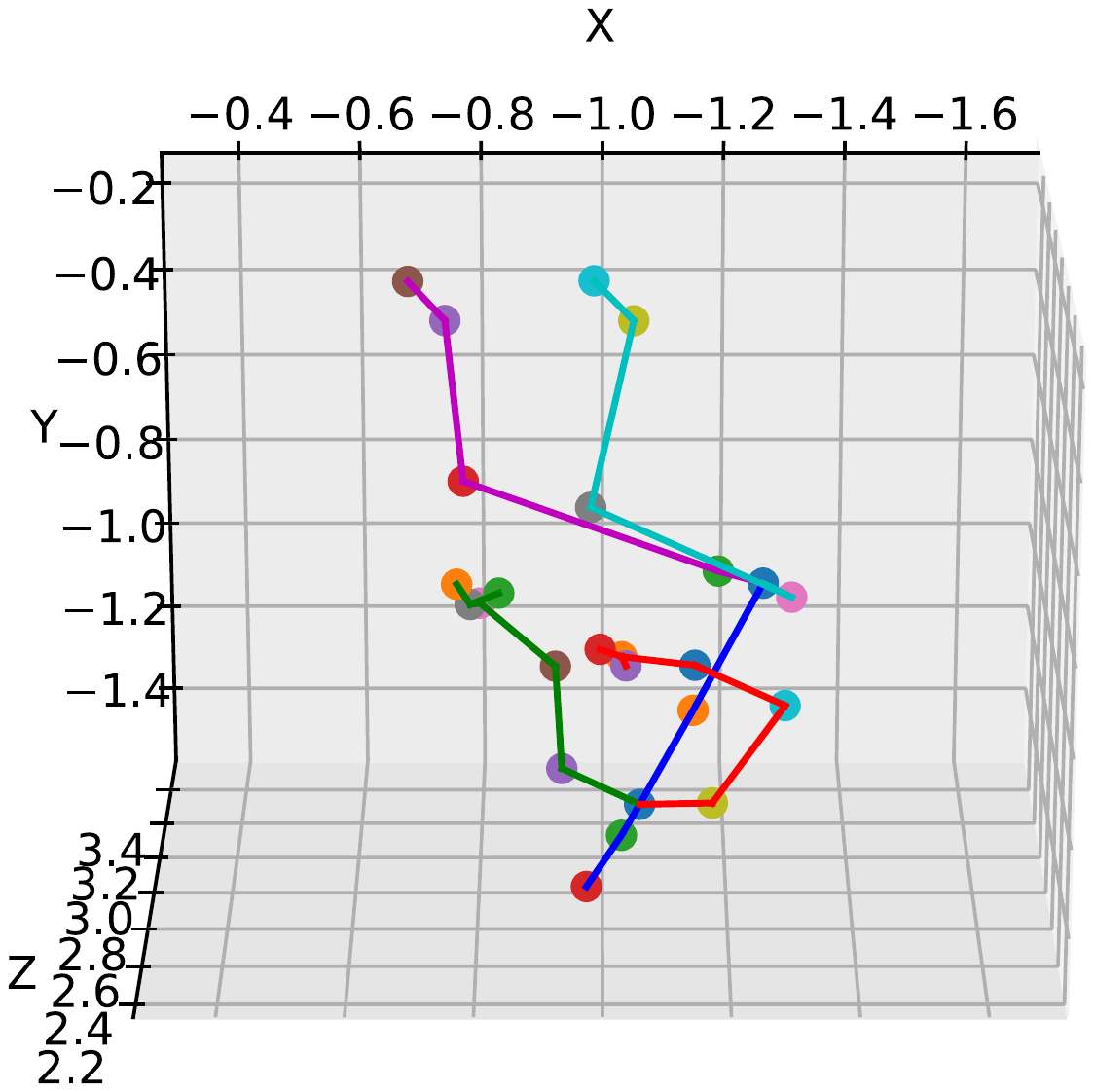}
            \caption{Rotated}\label{fig:rot}
        \end{subfigure}
    \end{minipage}
    \begin{minipage}[b]{0.22\linewidth}
        \centering
        \begin{subfigure}[b]{0.84\linewidth}
            \includegraphics[height=1.5cm]{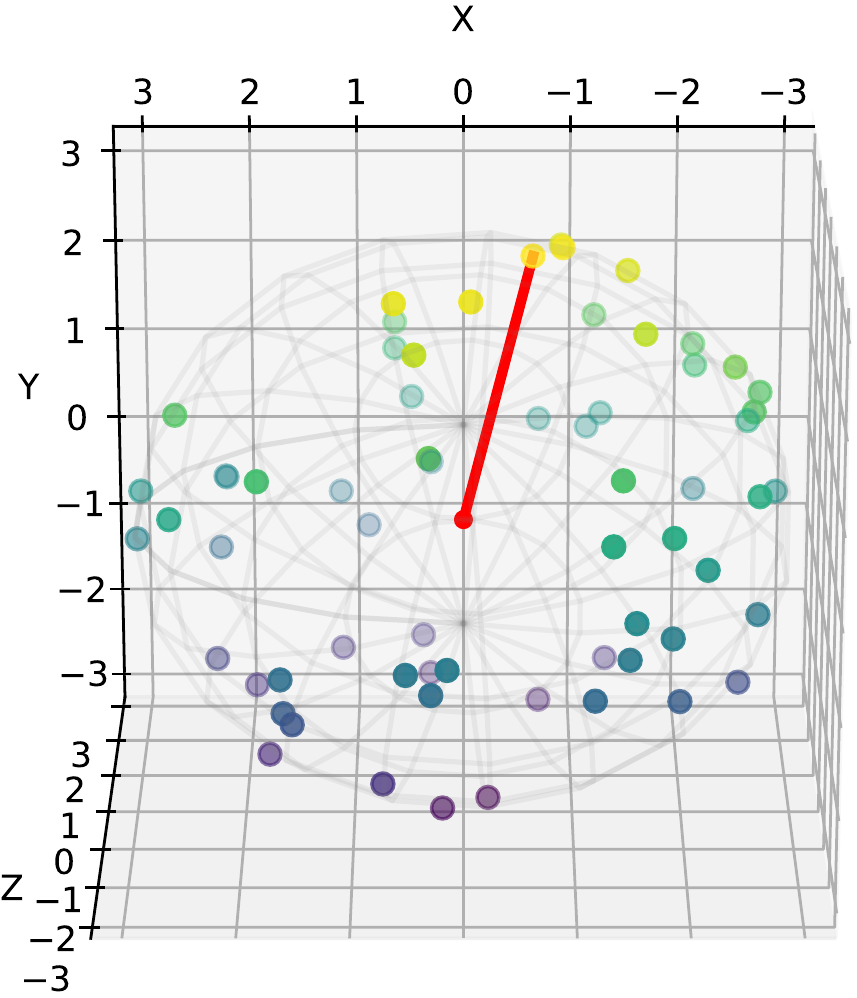}
            \caption{$\vv_o$'s}\label{fig:im1}
        \end{subfigure}
        \begin{subfigure}[b]{0.84\linewidth}
            \includegraphics[height=1.5cm]{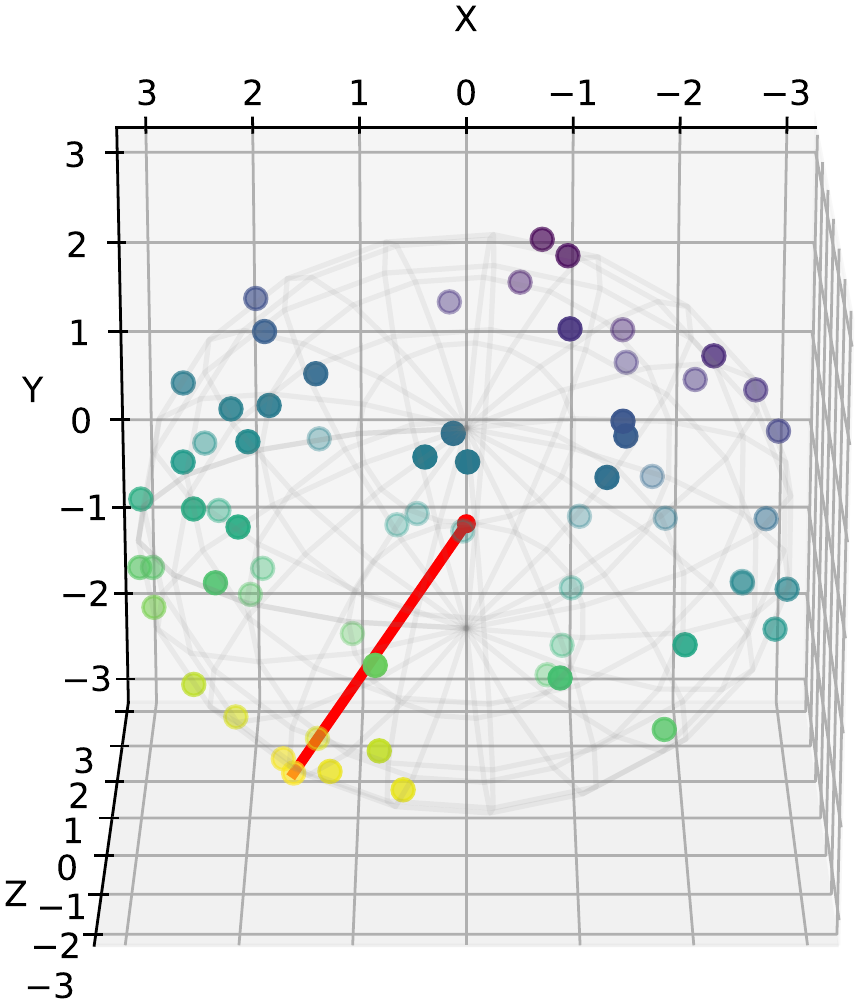}
            \caption{$\vv_o$'s}\label{fig:im2}
        \end{subfigure}
    \end{minipage}
    \begin{subfigure}[b]{0.5\linewidth}
        \includegraphics[height=3cm]{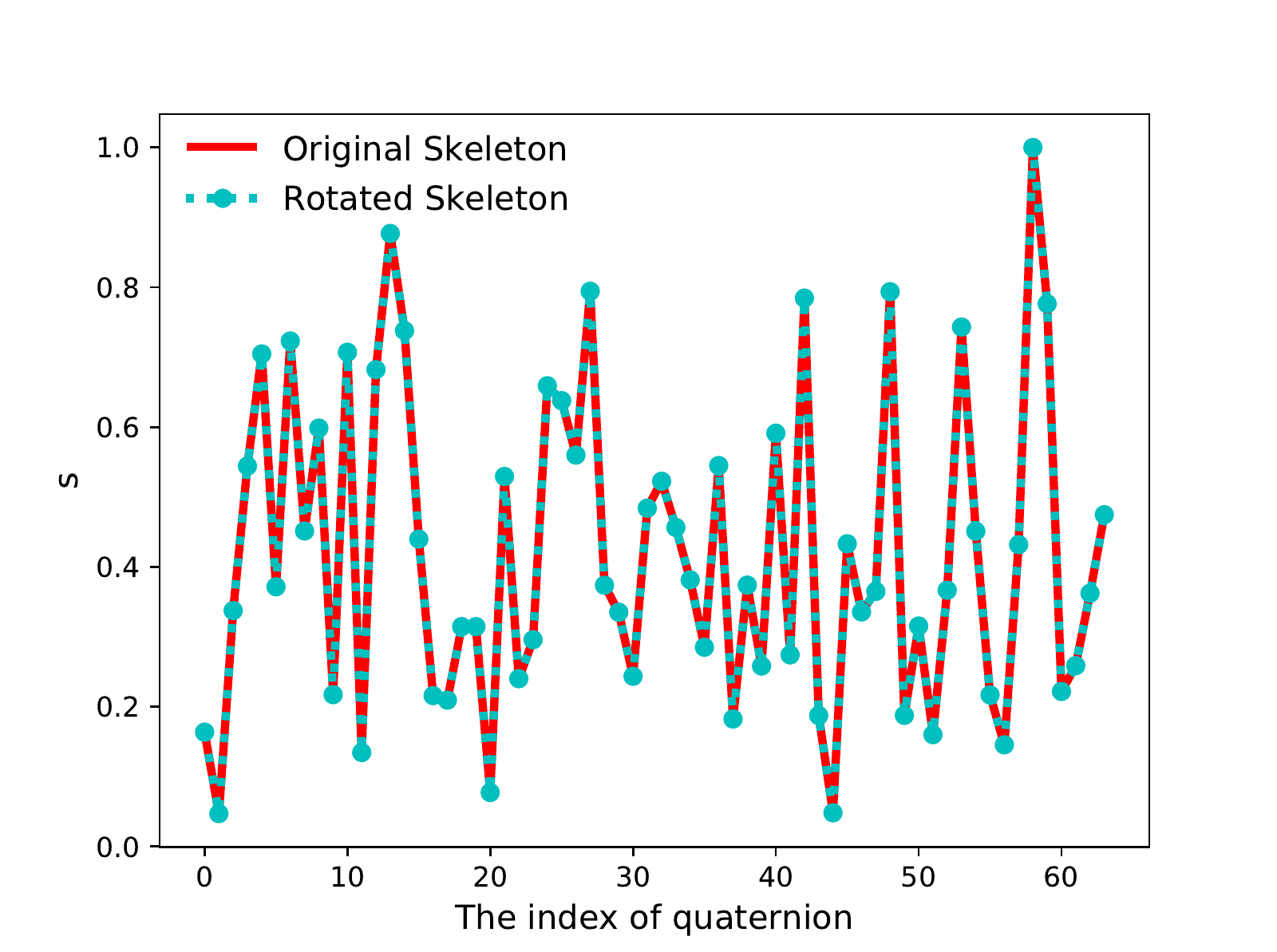}
        \caption{$s_o$'s}\label{fig:real}
    \end{subfigure}
    \caption{An illustration of the rotation-invariance and the rotation-equivariance of QPU. 
    For each skeleton, we represent the relative rotations between its joints as unit quaternions and treat these quaternions as the input of a QPU. 
    To visualize the rotation discrepancy, for a $\vv_o$ in (c) and the corresponding $\vv_o$ in (d), we connect each of them (normalized to a sphere) with the origin via a red line.}\label{fig:illu}
\end{figure}


\subsection{Backpropagation of QPU}\label{ssec:bp}
The forward pass of QPU involves $N$ individual weighting functions and a chain of Hamilton products.
While it is easy to compute the gradients of the weighting functions~\cite{dam1998quaternions}, the gradient of the chain of Hamilton products is not straightforward.

Given quaternions $\{q_i\}_{i=1}^{N}$, we first compute the differential of the $k$-th quaternion $q_k$ with respect to chain of Hamilton product $\bigotimes_{i=1}^{N} q_i$, denoted as $d(\bigotimes_{i} q_i)_{q_k} (\cdot)$, omitting $N$ for simplicity. 
As Hamilton product is bilinear, we have
\begin{eqnarray}
\begin{aligned}
d(\sideset{}{_i}\bigotimes q_i)_{q_k} (h)& = (\sideset{}{_{j<k}}\bigotimes q_j) \otimes h \otimes (\sideset{}{_{j>k}}\bigotimes q_j)\\
&=M_L(\sideset{}{_{j<k}}\bigotimes q_j) M_R({\sideset{}{_{j>k}}\bigotimes q_j}) h\\
&=M_L(B_k) M_R(A_k) h.
\end{aligned}
\end{eqnarray}
where $h$ represents a small variation of the $k$-th input quaternion, $B_k =\bigotimes_{j<k} q_j$ and $A_k = \bigotimes_{j>k} q_j$ are the chains of Hamilton product of the quaternions before and after  the $k$-th quaternion, respectively.

We then compute the differential of loss $L$ for the $k$-th quaternion.
Given the gradient of the scalar loss $L$ with respect to the output quaternion $\frac{\partial L}{\partial (\bigotimes_i q_i)}$ , \eg, computed with autograd~\cite{paszke2017automatic} of Pytorch, we have
\begin{eqnarray}
\begin{aligned}
dL_{q_k}(h) &= \langle \frac{\partial L}{\partial ( \bigotimes_i q_i)}, d(\sideset{}{_i} \bigotimes q_i)_{q_k}(h) \rangle\\
            &= \langle \frac{\partial L}{\partial ( \bigotimes_i q_i)}, M_L(B_k) M_R(A_k) h \rangle \\
            &= \langle M_R^T(A_k) M_L^T(B_k) \frac{\partial L}{\partial (\bigotimes_i q_i)}, h \rangle,
\end{aligned}
\end{eqnarray}
where $\langle \cdot, \cdot \rangle$ stands for the inner product between vectors and $M^T$ stands for the transpose of matrix $M$. 
Thus the gradient of loss $L$ for $q_k$ is given as
\begin{equation}
    \frac{\partial L}{\partial q_k} = M_R^T(A_k)  M_L^T(B_k) \frac{\partial L}{\partial ( \bigotimes_i q_i)}.
\end{equation}

To compute $B_k$ and $A_k$, we first compute the cumulative Hamilton product $C=(c_1, c_2, ..., c_N)$, where $c_k = \bigotimes_{j\leq k} q_j$.
Then $B_k = c_k \otimes q_k^*$ and $A_k = c_k^* \otimes c_N$.
We note that the gradient depends on a cumulative Hamilton product of all quaternions other than the $k$-th quaternion.
This computation process is different from that of standard weighted summation units, in which the backpropagation through addition only involves the differential variable instead of other inputs.
In contrast, it is interesting to see the gradient is a joint result of all other inputs to QPU except the items of the given differential variable.

\section{Implementations and Further Analysis}
\subsection{QPU-based neural networks}\label{ssec:qfc}
As shown in Figure~\ref{fig:qmlp}, multiple QPUs receiving the same input quaternions form a fully-connected (FC) layer. 
The parameters of the QPU-based FC layer include the weights and the bias used in the QPUs.
We initialize them using the Xavier uniform initialization~\cite{glorot2010understanding}.
Different from real-valued weighted summation, the QPU itself is nonlinear. 
Hence there is no need to introduce additional nonlinear activation functions. 
As aforementioned, the output of QPU is still a unit quaternion so that we can connect multiple QPU-based FC layers sequentially. 
Accordingly, the stack of multiple QPU-based FC layers establishes a quaternion multi-layer perceptron (QMLP) model for 3D rotation data. 
Note that  the Hamilton product is not commutative, $i.e.$, $q_1 \otimes q_2 \neq q_2 \otimes q_1$ in general, the stacking of two QPU-based FC layers is not equivalent to one QPU-based FC layer. 

Besides the QPU-based FC layer and the corresponding QMLP model, our QPU can also be used to implement a quaternion-based graph convolution layer. 
Specifically, given a graph with an adjacency matrix $A=[a_{ij}]\in\mathbb{R}^{N\times N}$, the aggregation of its node embeddings is achieved by the multiplication between its adjacency matrix and the embeddings~\cite{kipf2016semi}. 
When the node embeddings correspond to 3D rotations and are represented as unit quaternions, we can implement a new aggregation layer using the chain of Hamilton products shown in Eq.~(\ref{eq:chain}), $i.e.$, given input quaternions $\{q_i\}_{i=1}^{N}$ and the adjacency matrix $A$, the $i$-th output $q_{i}^{agg}=\bigotimes_{j=1}^{N} q_j^{a_{ij}}$.
Stacking such a QPU-based aggregation layer with a QPU-based FC layer, we achieve a quaternion-based graph convolution layer accordingly.

\subsection{Compatibility with real-valued models}\label{ssec:qr}
Besides building pure quaternion-valued models ($i.e.$, the QMLP mentioned above), we can plug our QPU-based layers into existing real-valued models easily.
Suppose that we have one QPU-based layer, which takes $N$ unit quaternions as inputs and derives $M$ unit quaternions accordingly. 
When the QPU-based layer receives rotations from a real-valued model, we merely need to reformulate the inputs as unit quaternions.
When a real-valued layer follows the QPU-based layer, we can treat the output of the QPU-based layer as a real-valued matrix with size $M\times 4$ and directly feed the output to the subsequent real-valued layer. 
Moreover, when rotation-invariant (rotation-equivariant) features are particularly required, we can feed only the real part (the imaginary part) to the real-valued layer accordingly.

Besides the straightforward strategy above, we further propose an angle-axis map to connect QPU-based layers with real-valued ones. 
Specifically, given a unit quaternion $q=[s,\vv]$, the proposed mapping function is defined as: 
\begin{equation}\label{angle-axis}
\text{AngleAxisMap}(q) = [\arccos(s), \frac{\vv}{\|\vv\|_2}].
\end{equation}
We use this function to map the output of a QPU, which lies on a non-Euclidean manifold~\cite{huynh2009metrics}, to the Euclidean space. 
Using this function to connect QPU-based layers with real-valued ones makes the learning of the downstream real-valued models efficient, which helps us improve learning results. 
Our angle-axis mapping function is better than the logarithmic map used in~\cite{vemulapalli2016rolling, vemulapalli2014human,huang2017deep,szczkesna2018quaternion}. 
The logarithmic map $\log(q) = [0, \arccos(s)\frac{\vv}{\|\vv\|_2}]$ mixes the angular and axial information, while our map keeps this disentanglement,  which is essential to preserve the rotation-invariance and rotation-equivariance of the output.

\subsection{Computational Complexity and Accelerations}
As shown in Section~\ref{ssec:bp}, our QPU supports gradient-based update, so both QPU-based layers and the models combining QPU-based layers with real-valued ones can be trained using backpropagation. 
Since each QPU handles a quaternion as one single element, QPU is efficient in terms of parameter numbers.
A real-valued FC with $N$ inputs and $M$ outputs requires $NM$ parameters, while a QPU with the same input and output dimensions requires only $\frac{1}{16}NM$ parameters. 

Due to the usage of Hamilton products, however, QPU requires more computational time than the real-valued weighted summation unit. 
Take a QPU-based FC layer with $N$ input quaternions and $M$ output quaternions as an example.
In the forward pass, the weighting step requires $N M$ multiplications and $N M$ sine and cosine computations.
Each Hamilton product requires $16$ multiplications and $12$ additions.
So the chain of $N-1$ Hamilton products requires $16 (N-1) M$ multiplications and $12 (N-1) M$ additions.
In total, a QPU-based FC layer requires $(17 N - 16) M$ multiplications, $(13 N - 12) M$ additions and $N M$ cosine and sine computations.
As a comparison, the real-valued FC layer with the same input and output size ($i.e.$, $4N$ inputs and $4M$ outputs in terms of real numbers) requires $16 NM$ multiplications, $16(N-1) M$ additions. 

In the backward pass, the gradient computation of a chain of Hamilton products requires the cumulative Hamilton product of the power weighted input quaternions. 
In particular, when computing the gradient of a chain of Hamilton products for an input quaternion, we need to do two Hamilton products and one matrix-matrix multiplication. 
As a result, the computational time is at least doubled compared with the forward pass and tripled if we recompute the cumulative Hamilton product. 
To deal with this challenge, we have two strategies: (a) store the result of cumulative Hamilton product in the forward pass, (b) recompute the same quantity during the backward pass.
The strategy (a) saves computational time but requires to save a potentially large feature map ($i.e.$, $N$ times the size of the feature map). 
The strategy (b) requires more computation time but no additional memory space.
We tested both options in our experiments.
When the QPU-based layer is with small feature maps ($i.e.$, at the beginning of a neural network with fewer input channels), both these two strategies work well. 
When we stack multiple QPU-based layers and apply the model to large feature maps, we need powerful GPUs with large memory spaces to implement the first strategy.

According to the above analysis, the computational bottleneck of our QPU is the chain of Hamilton products. 
Fortunately, we can accelerate this step by breaking down the computation like a tree. 
Specifically, as Hamilton product satisfies the combination law, $i.e.$, $(q_1 \otimes q_2) \otimes q_3 = q_1 \otimes (q_2 \otimes q_3)$, we can multiply quaternions at odd positions with quaternions at even positions in a parallel way and compute the whole chain by repeating this step recursively. For a chain of $N-1$ Hamilton products, this parallel strategy reduces the time complexity from $\mathcal{O}(N)$ to $\mathcal{O}(\log(N))$.
We implemented this strategy in the forward pass and witnessed a significant acceleration. 
In summary, although the backpropagation of our QPU-based model requires much more computational resources ($i.e.$, time, or memory) than that of real-valued models, its forward pass merely requires slightly more computations than that of real-valued models and can be accelerated efficiently via a simple parallel strategy.

\section{Related Work}
\paragraph{3D deep learning}
Deep learning has been widely used in the tasks relevant to 3D data. 
A typical application is skeleton-based action recognition~\cite{yan2018spatial,si2019attention,shi2019skeleton,huang2017deep}. 
Most existing methods often use 3D positions of joints as their inputs. 
They explore the graph structure of the skeleton data and achieve significant gains in performance.
For example, the AGC-LSTM in~\cite{si2019attention} introduces graph convolution to an LSTM layer and enhances the performance with an attention mechanism.
The DGNN in~\cite{shi2019skeleton} uses both node feature and edge feature to perform deep learning on a directed graph. 
In \cite{nguyen2019neural}, deep learning on symmetric positive definite (SPD) matrix was used to perform skeleton-based hand gesture recognition. 
Recently, several works are proposed to perform action recognition based on 3D rotations between bones.
Performing human action recognition by representing human skeletons in a Lie group (rotation and translation) is first proposed in~\cite{vemulapalli2014human} and further explored in~\cite{vemulapalli2016rolling}.
The LieNet in~\cite{huang2017deep} utilizes similar data representation in a deep learning framework on 3D rotation manifold, where input rotation matrices were transformed progressively by being multiplied with learnable rotation matrices.
While the LieNet uses pooling to aggregate rotation features, our proposed QPU is a more general learning unit, where Hamilton products could explore the interaction between rotating features.
Moreover, the weights in the LieNet are rotation matrices, whose optimization is constrained on the Lie group, while our QPU has unconstrained real-valued weights and can be optimized using standard backpropagation.

Besides the methods focusing on 3D skeletons, many neural networks are designed for 3D point clouds. 
The spherical CNNs~\cite{cohen2018spherical,esteves2018learning} project a 3D signal onto a sphere and perform convolutions in the frequency domain using spherical harmonics, which achieve rotation-invariant features.
The ClusterNet~\cite{chen2019clusternet} transforms point cloud into a rotation-invariant representation as well. 
The 3D Steerable CNNs~\cite{weiler20183d} and the TensorField Networks~\cite{thomas2018tensor} incorporate group representation theory in 3D deep learning and learn rotation-equivariant features by computing equivariant kernel basis on fields.
Note that these models can only achieve either rotation-invariance or rotation-equivariance. To our knowledge, our QPU makes the first attempt to achieve these two important properties in a unified framework.

\paragraph{Quaternion-based Learning}
Quaternion is widely used in computer graphics and control theory to represent 3D rotation, which only requires four parameters to describe a rotation matrix.
Recently, many efforts have been made to introduce quaternion-based models to the applications of computer vision and machine learning. 
Quaternion wavelet transforms (QWT)~\cite{bayro2006theory,zhou2007quaternion} and quaternion sparse coding~\cite{yu2013quaternion} have been successfully applied for image processing.
For skeleton-based action recognition, the quaternion lifting schema~\cite{szczkesna2018quaternion} used the SLERP~\cite{shoemake1985animating} to extract hierarchical information in rotation data to perform gait recognition.
The QuaterNet~\cite{pavllo2018quaternet} performs human skeleton action prediction by predicting the relative rotation of each joint in the next step with unit quaternion representation.

More recently, quaternion neural networks (QNNs) have produced significant advances~\cite{zhu2018quaternion,parcollet2018quaternion,parcollet2019quaternion,tay2019lightweight}. 
The QNNs replace multiplications in standard networks with Hamilton products, which can better preserve the interrelationship between channels and reduce parameters.
The QCNN in~\cite{zhu2018quaternion} proposed to represent rotation in color space with quaternion.
The QRNN in~\cite{parcollet2018quaternion} extended RNN to a quaternion version and applied it to speech tasks. 
Different from these existing models, the proposed QPU roots its motivation in the way of interaction on the 3D rotation group and uses Hamilton product to merge power weighted quaternions.

\section{Experiments}
To demonstrate the effectiveness of our QPU, we design the QMLP model mentioned above and test it on both synthetic and real-world datasets. 
In particular, we focus on the 3D skeleton classification tasks like human action recognition and hand action recognition.
For our QPU-based models, we represent each skeleton as the relative rotations between its connected bones ($a.k.a.$, joint rotations). 
Taking these rotations as inputs, we train our models to classify the skeletons and compare them with state-of-the-art rotation representation methods.
Besides testing the pure QPU-based models, we also use QPU-based layers to replace real-valued layers in existing models~\cite{si2019attention, shi2019skeleton} and verify the compatibility of our model accordingly.
The code is available at \url{https://github.com/IICNELAB/qpu_code}.

\subsection{Synthetic dataset --- CubeEdge}
We first design a synthetic dataset called ``CubeEdge'' and propose a simple task to demonstrate the rotation robustness of our QPU-based model. 
The dataset consists of skeletons composed of consecutive edges taken from a 3D cube (a chain of edges).
These skeletons are categorized into classes according to their shapes, $i.e.$, the topology of consecutive edges.
For each skeleton, we use the rotations between consecutive edges as its feature.

For each skeleton in the dataset, we generate it via the following three steps, as shown in Figure~\ref{fig:cube_gen}:
$i$) We initialize the first two adjacent edges deterministically.
$ii$) We ensure that the consecutive edges cannot be the same edge, so given the ending vertex of the previous edge, we generate the next edge randomly along either of the two valid directions.  
Accordingly, each added edge doubles the number of possible shapes. 
We repeat this step several times to generate a skeleton.
$iii$) We apply random shear transformations to the skeletons and augment the dataset, and split the dataset for training and testing. 
We generate each skeleton with seven edges so that our dataset consists of 32 different shapes (classes) of skeletons. 
We use 2,000 samples for training and testing, respectively. 
For the skeletons in the testing set, we add Gaussian noise $\mathcal{N}(0, \sigma^2)$ to their vertices before shearing, where the standard deviation $\sigma$ controls the level of noise. 
Figure~\ref{fig:cube_sample} gives four testing samples.
Additional random rotation is applied to testing set when validating the robustness to rotations. 

\begin{figure}[t]
    \centering
        \begin{subfigure}[b]{0.84\linewidth}
            \includegraphics[width=1\linewidth]{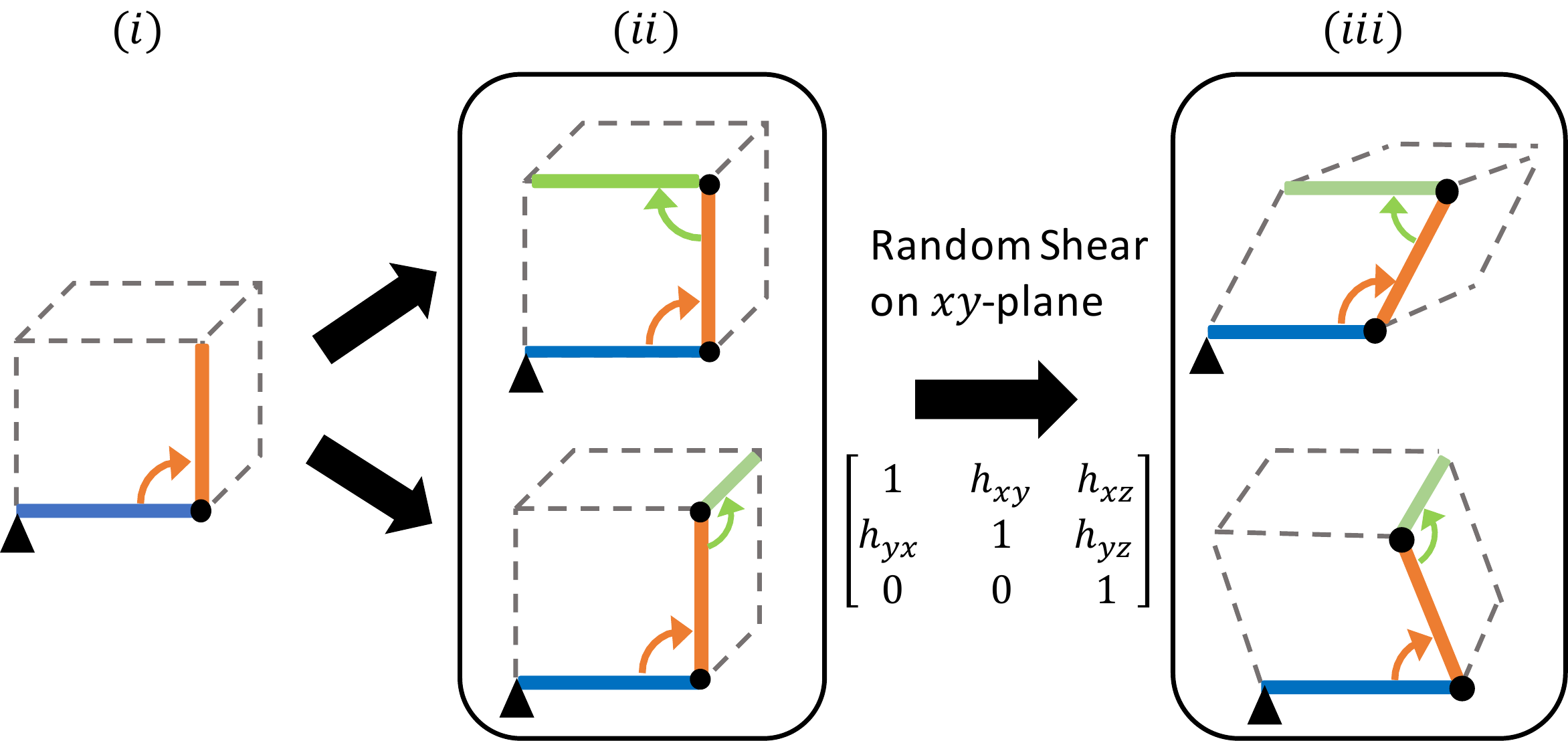}
            \caption{Generation process of CubeEdge. The root vertex is marked with a triangle}
            \label{fig:cube_gen}
        \end{subfigure}
        \begin{subfigure}[b]{0.84\linewidth}
            \includegraphics[width=1\linewidth]{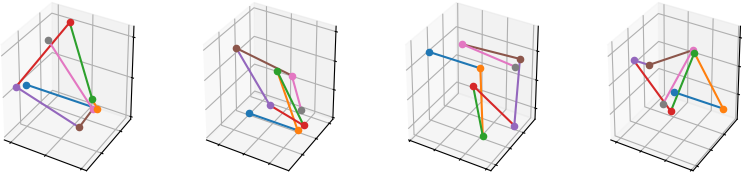}
            \caption{Samples from testing set of CubeEdge with 7 edges}
            \label{fig:cube_sample}
        \end{subfigure}
    \caption{Illustrations of the CubeEdge dataset. 
    (a) The steps to generate the skeletons. $i$) fix the two first edges; $ii$) pick the next edge from two possible candidates (doubling the number of classes); $iii$) apply random shear on $xy$-plane. (b) Samples from the testing set.
    }
    \label{fig:cube}
\end{figure}

\begin{figure}[t]
\centering        
        \begin{subfigure}[b]{0.29\linewidth}
            \includegraphics[width=1\linewidth]{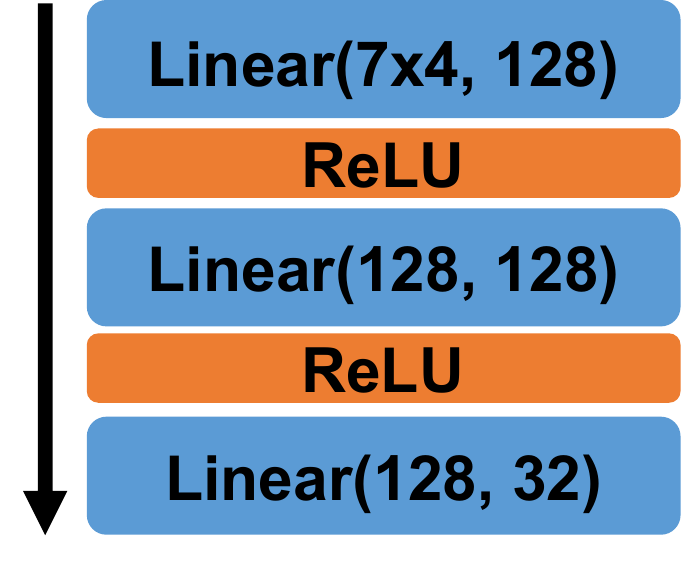}
            \caption{RMLP}
            \label{fig:toy_rmlp}
        \end{subfigure}
        \quad
        \begin{subfigure}[b]{0.29\linewidth}
            \includegraphics[width=1\linewidth]{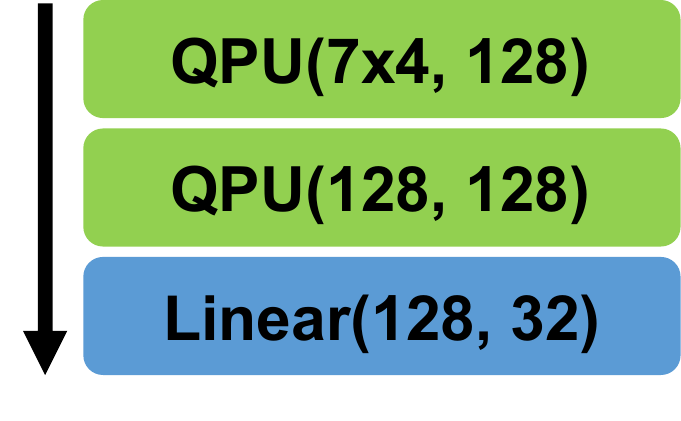}
            \caption{QMLP}
            \label{fig:toy_qmlp}
        \end{subfigure}
        \quad
        \begin{subfigure}[b]{0.29\linewidth}
            \includegraphics[width=1\linewidth]{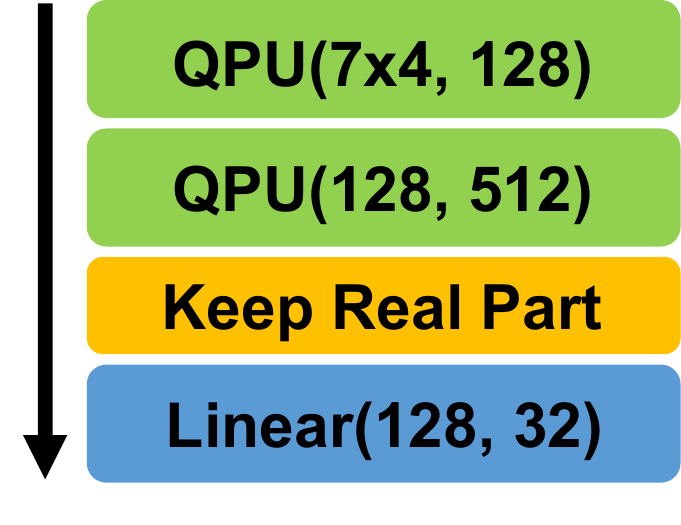}
            \caption{QMLP-RInv}
            \label{fig:toy_qmlp_rinv}
        \end{subfigure}
    \caption{The architectures of the models for synthetic data.}
    \label{fig:toy_models}
\end{figure}

To classify the synthetic skeletons, we apply our \textbf{QMLP} model and its variant \textbf{QMLP-RInv} and compared them with a standard real-valued MLP (\textbf{RMLP}). 
For fairness, all three models are implemented with three layers, and the size of output feature maps are the same.
The RMLP is composed of three fully-connected layers connected by two ReLU activation functions. 
Our QMLP is composed of two stacked QPU layers, followed by a fully-connected layer. 
Our QMLP-RInv is composed of two stacked QPU layers with only the real part of the output retained, and a fully-connected layer is followed.
The architectures of these three models are shown in Figure~\ref{fig:toy_models}.

After training these three models, we test them on the following two scenarios: $i$) comparing them on the testing set directly; and $ii$) adding random rotations to the testing skeletons and then comparing the models on the randomly-rotated testing set.
The first scenario aims at evaluating the feature extraction ability of different models, while the second one aims at validating their rotation-invariance.
For each scenario, we compare these three models on their classification accuracy. 
Their results with respect to different levels of noise ($i.e.$, $\sigma$) are shown in Figure~\ref{fig:toy}.
In both scenarios, our QMLP and QMLP-RInv outperform the RMLP consistently, which demonstrates the superiority of our QPU model. 
Furthermore, we find that the QMLP-RInv is robust to the random rotations imposed on the testing data --- it achieves nearly the same accuracy in both scenarios and works better than QMLP in the scenario with random rotations. 
This phenomenon verifies our claims: $i$) the real part of our QPU's output is rotation-invariant indeed; $ii$) the disentangled representation of rotation-invariant feature and rotation-equivariant feature could inherently make the network robust to rotations.

\begin{figure}[t]
\centering
        \begin{subfigure}[b]{0.49\linewidth}
            \includegraphics[width=1.0\linewidth]{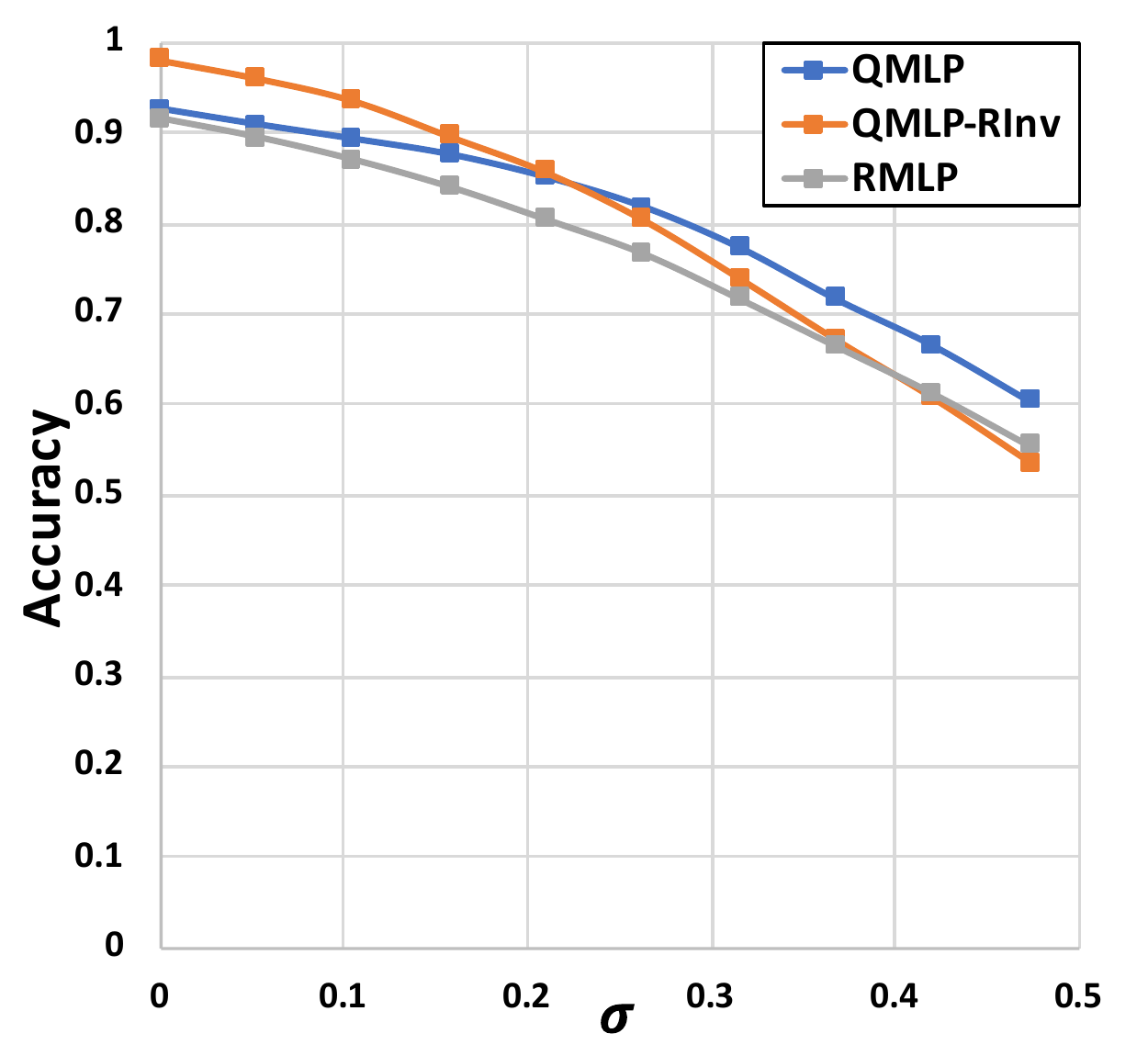}
            \caption{Without random rotations}
            \label{fig:toy6}
        \end{subfigure}
        \begin{subfigure}[b]{0.49\linewidth}
            \includegraphics[width=1.0\linewidth]{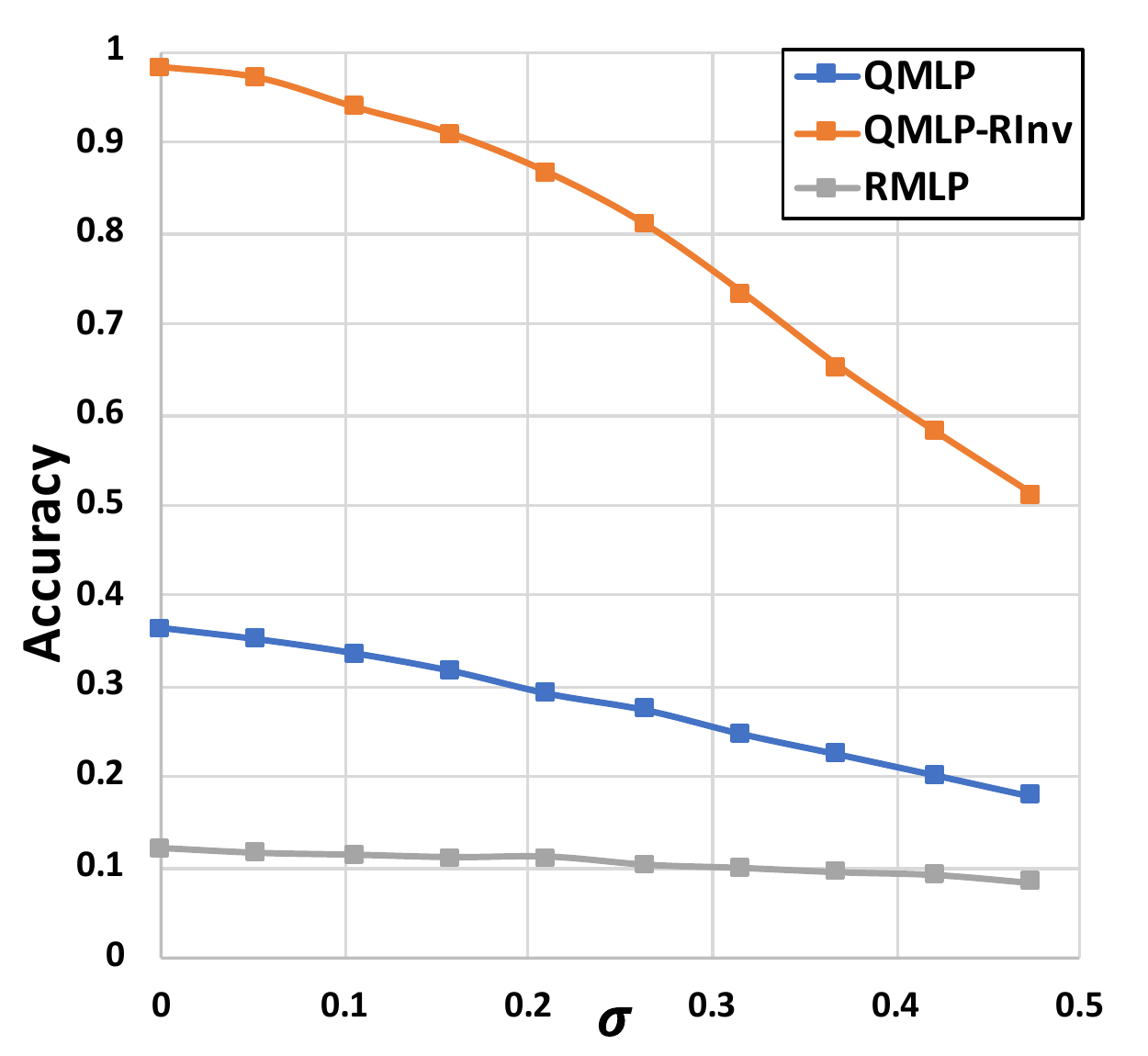}
            \caption{With random rotations}
            \label{fig:toy7}
        \end{subfigure}%
    \caption{Comparisons on testing accuracy.}
    \label{fig:toy}
\end{figure}

\subsection{Real-world data}
Besides testing on synthetic data, we also consider two real-world skeleton datasets: the \textbf{NTU} dataset~\cite{shahroudy2016ntu} for human action recognition and the \textbf{FPHA} dataset~\cite{garcia2018first} for hand action recognition.
The NTU dataset provides 56,578 human skeleton sequences performed by 40 actors belonging to 60 classes. 
The cross-view protocol is used in our experiments.
We use 37,646 sequences for training and the remaining 18,932 sequences for testing. 
FPHA dataset provides hand skeletons recorded with magnetic sensors and computed by inverse kinematics, which consists of 1,175 action videos belonging to 45 categories and performed by 6 actors. 
Following the setting in~\cite{garcia2018first}, we use 600 sequences for training and 575 sequences for testing.

For these two datasets, we implement three real-valued models as our baselines.
    
    \textbf{RMLP-LSTM} is composed of a two-layer MLP and a one-layer LSTM. 
    The MLP merges each frame into a feature vector and is shared between frames. 
    Features of each frame are then fed into the LSTM layer. 
    We average the outputs of the LSTM at different steps and feed it into a classifier.
    
    \textbf{AGC-LSTM}~\cite{si2019attention} first uses a FC layer on each joint to increase the feature dimension. 
    The features are fed into a 3-layer graph convolutional network. 
    
    \textbf{DGNN}~\cite{shi2019skeleton} is composed of 10 directed graph network (DGN) blocks that merge both node features and edge features. 
    Temporal information is extracted using temporal convolution between the DGN blocks.

To demonstrate the usefulness of our QPU, we substitute some of their layers with our QPU-based FC layers, and propose the following three QPU-based models:
    
    \textbf{QMLP-LSTM}:
     The two FC layers of the RMLP-LSTM are replaced with two QPU-based FC layers.
     
    
    \textbf{QAGC-LSTM}: The input FC layer of the AGC-LSTM replaced with a QPU-based FC layer. 
    In the original AGC-LSTM, the first FC layer only receives a feature from one joint. 
    As a result, the input would be a single quaternion, and the QPU layer would be unable to capture interactions between inputs.
    We thus augment the input by stacking the feature of the parent joint so that the input channel to QPU is two quaternions.

    
    \textbf{QDGNN}: The FC layers in the first DGN block of QDGNN are replaced with QPU-based FC layers. 
    The original DGNN uses 3D coordinates of joints as node features and vectors from parent joint to child joint as edge features.
    In our QDGNN, the node features are the unit quaternions corresponding to the joint rotations and the edge features are the differences between joint rotations, $i.e.$, $q_1 \otimes q_{2}^{*}$ computes the edge features for joint rotations $q_1$ and $q_2$. 
    Moreover, the original DGNN aggregates node features and edge features through the normalized adjacent matrix. 
    In our QDGNN, we implement this aggregation as the QPU-based aggregation layer proposed in Section~\ref{ssec:qfc}.

Similar to the experiments on synthetic data, we consider for the three QPU-based models their variants that only keep the real part of each QPU's output, denoted as \textbf{QMLP-LSTM-RInv}, \textbf{QAGC-LSTM-RInv}, and \textbf{QDGNN-RInv}, respectively.
For these variants, we multiply the output channels of the last QPU-based FC layer by $4$ and keep only the real parts of its outputs. 
In all real-world experiments, we use the angle-axis map (Eq.~(\ref{angle-axis})) to connect the QPU-based layers and real-valued neural networks.
The number of parameters of the QPU-based models is almost the same with that of the corresponding real-valued models.

We compare our QPU-based models with corresponding baselines under two configurations: $i$) both training data and test data have no additional rotations (NR); $ii$) the training data are without additional rotations while the testing ones are with arbitrary rotations (AR).
The comparison results on NTU and FPHA are shown in Table~\ref{tab:result}, respectively. 
We can find that in the first configuration, the performance of our QPU-based models is at least comparable to the baselines. 
In the second and more challenging setting, which requires rotation-invariance, our QPU-based models that use only outputs' real parts as features retain high accuracy consistently in most situations, while the performance of the baselines degrades a lot. 
Additionally, we quantitatively analyze the impact of our angle-axis map on testing accuracy.
In Table~\ref{tab:2}, the results on the FPHA dataset verify our claim in Section~\ref{ssec:qr}: for our QPU-based models, their performance boosts a lot when we connect their QPU-based layers with the following real-valued layers through the angle-axis map.

\begin{table}[t]
  \caption{Comparisons on testing accuracy (\%).}\label{tab:result}
  \vspace{-6pt}
  \centering
  \begin{tabular}{@{\hspace{2pt}}l@{\hspace{3pt}}c@{\hspace{3pt}}|
  @{\hspace{3pt}}c@{\hspace{6pt}}c@{\hspace{3pt}}|
  @{\hspace{3pt}}c@{\hspace{6pt}}c@{\hspace{2pt}}}\hline\hline
    \multirow{2}{*}{Model} &\#Param. &\multicolumn{2}{c|@{\hspace{3pt}}}{NTU}  & \multicolumn{2}{@{\hspace{3pt}}c}{FPHA} \\
    &(Million) &NR &AR &NR &AR\\\hline
    RMLP-LSTM       & 0.691 & 72.92 & 24.67 & 67.13 & 17.39\\
    QMLP-LSTM       & 0.609 & 69.72 & 26.60 & \textbf{76.17} & 24.00 \\
    QMLP-LSTM-RInv  & 0.621 & \textbf{75.84} & \textbf{75.84} & 68.00 & \textbf{67.83} \\\hline
    AGC-LSTM        & 23.371 & \textbf{90.50}  & 55.17 & 77.22  & 24.70 \\
    QAGC-LSTM       & 23.368 & 87.18 & 39.43 & \textbf{79.13} & 24.70 \\
    QAGC-LSTM-RInv  &23.369 & 89.68 & \textbf{89.92} & 72.35 &\textbf{71.30}\\\hline
    DGNN            & 4.078 & \textbf{87.45} & 23.30 & 80.35 & 24.35\\
    QDGNN           & 4.075 & 86.55 & 41.06 & \textbf{82.26} & 27.13\\
    QDGNN-RInv      & 4.076 & 83.88 & \textbf{83.88} & 76.35 & \textbf{76.35}
\\\hline\hline
  \end{tabular}
\end{table}

\begin{table}[t]
\caption{The impact of Angle-Axis map on accuracy (\%).}
\vspace{-6pt}
\centering
\begin{tabular}{@{\hspace{2pt}}c|c|c@{\hspace{2pt}}}\hline\hline
     Model & With Eq.~(\ref{angle-axis}) & Without Eq.~(\ref{angle-axis}) \\\hline
QMLP-LSTM   & \textbf{76.17} & 73.04 \\
QAGC-LSTM   & \textbf{79.13} & 75.65 \\
\hline\hline
\end{tabular}
\label{tab:2}
\end{table}

\section{Conclusion}
In this work, we proposed a novel quaternion product unit for deep learning on 3D rotation groups. 
This model can be used as a new module to construct quaternion-based neural networks, which presents encouraging generalization ability and flexible rotation robustness. 
Moreover, our implementation makes this model compatible with existing real-valued models, achieving end-to-end training through backpropagation. 
Besides skeleton classification, we plan to extend our QPU to more applications and deep learning.  
In particular, we have done a preliminary experiment on applying our QPU to point cloud classification task. 
We designed a quaternion representation for the neighbor point set of a centroid which first converts 3D coordinates of neighbor points into quaternion-based 3D rotations then cyclically sort them according to their rotation order around the vector from the origin to the centroid.
We designed our models based on Pointnet++~\cite{qi2017pointnet++, pytorchpointnet++} by replacing the first Set Abstraction layer with a QMLP module.
We tested our model on ModelNet40~\cite{wu20153d} and our rotation-invariant model achieved 80.1\% test accuracy.
Please refer to the supplementary file for more details of our quaternion-based point cloud representation, network architectures and experimental setups.

\textbf{Acknowledgement}
The authors would like to thank David Filliat for constructive discussions. 

{\small
\bibliographystyle{ieee_fullname}
\bibliography{reference}
}

\appendix

\section*{Supplementary}

\begin{figure*}[!t]
    \centering
        \includegraphics[width=0.3\linewidth]{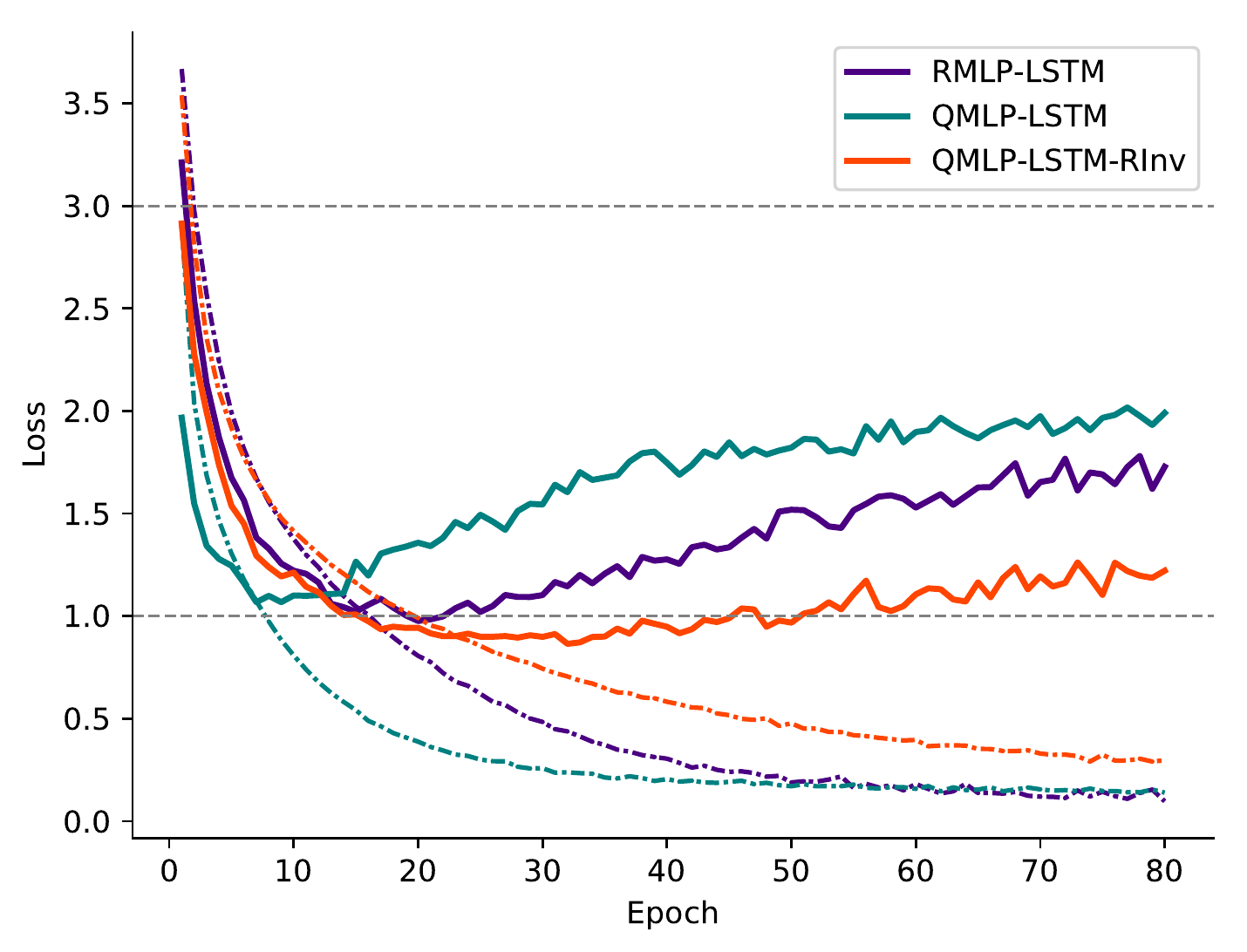}
        \includegraphics[width=0.3\linewidth]{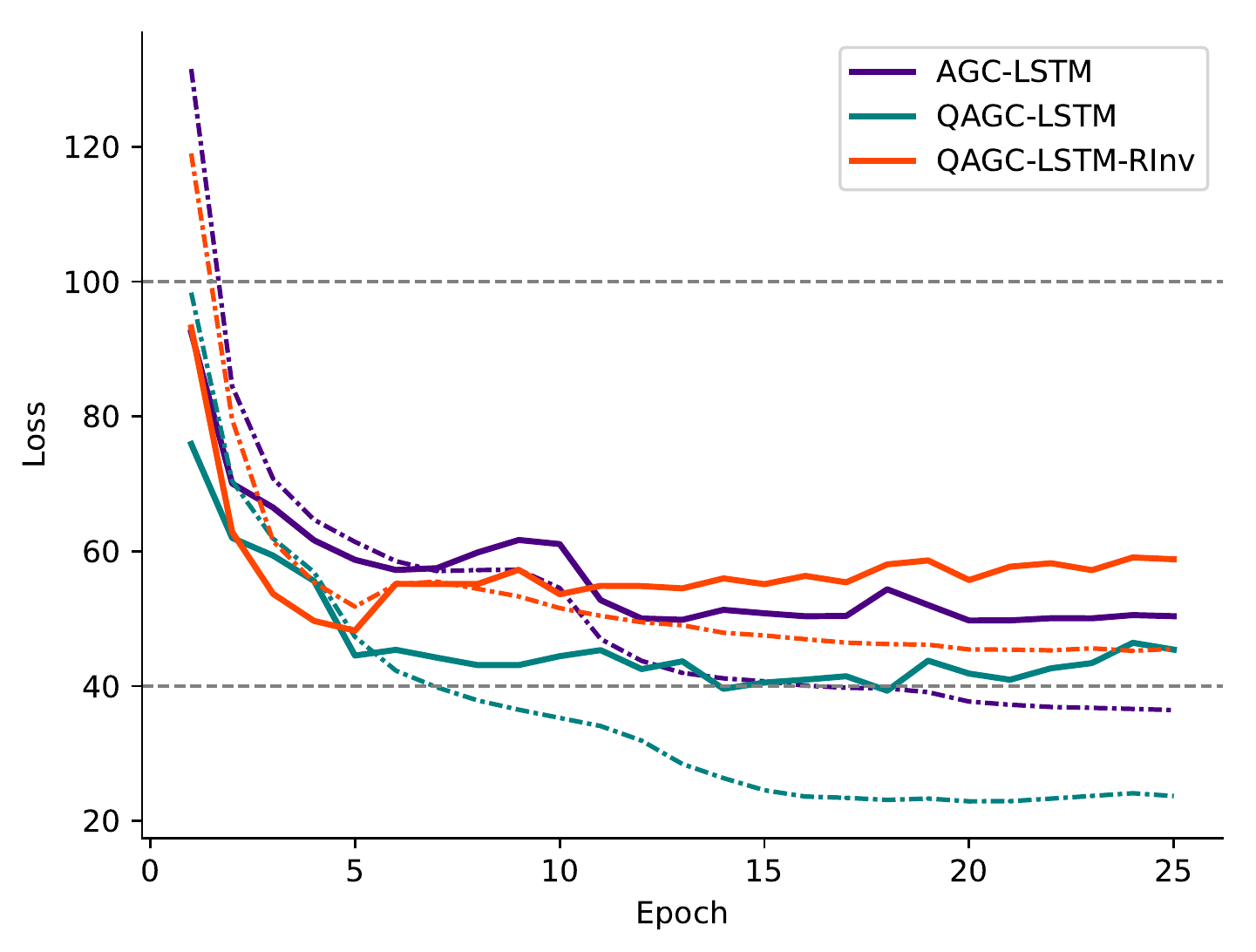}
        \includegraphics[width=0.3\linewidth]{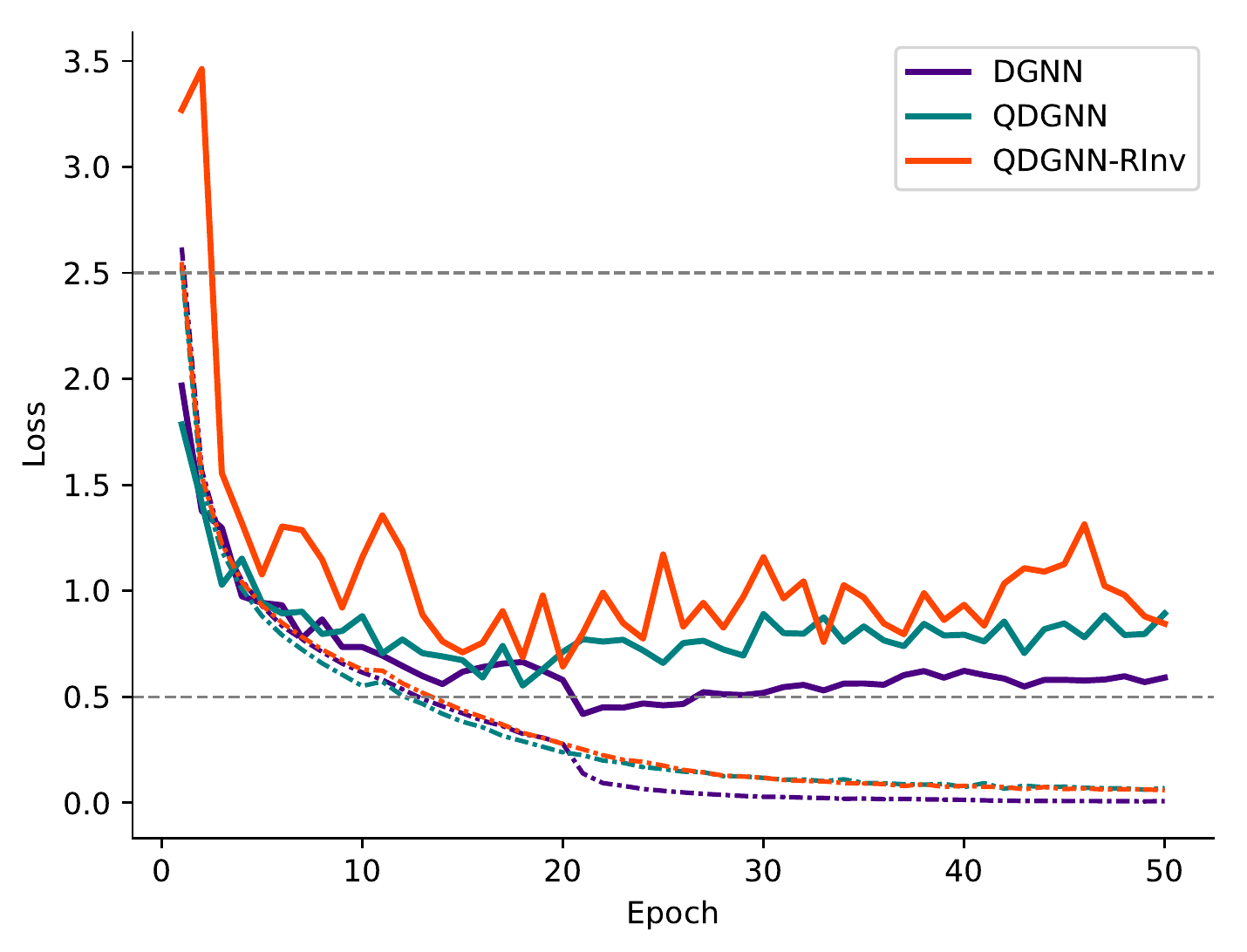}
    \vspace{-8pt}
    \caption{Comparisons on learning process.
    From left to right: MLP-LSTM, AGC-LSTM, DGNN.}
    \label{fig:loss}
\end{figure*}

\section{Effect of Angle-Axis Map on Accuracy}
Table~\ref{tab:angle-axis} shows the testing accuracy of all models using QPU-based layers with and without our angle-axis map. 
We perform the experiments on the FPHA dataset.
Using our angle-axis map is beneficial in most situations. 

\begin{table}[h]
  \caption{The impact of Angle-Axis map on accuracy (\%).}
  \vspace{-8pt}
  \centering
  \begin{tabular}{l | c  c}\hline\hline
    {Model}  & w/ angle-axis  & w/o angle-axis\\\hline
    QMLP-LSTM        & \textbf{76.17} & 73.04  \\
    QMLP-LSTM-RInv   & \textbf{68.00} & 64.17  \\\hline
    QAGC-LSTM        & \textbf{79.13} & 75.65  \\
    QAGC-LSTM-RInv   & 78.44          & \textbf{78.96} \\\hline
    QDGNN            & \textbf{82.26} & 80.87  \\
    QDGNN-RInv       & \textbf{76.35} & 74.26  \\\hline\hline
  \end{tabular}
  \label{tab:angle-axis}
\end{table}

\section{Effect of Rotation Data Augmentation}
As we mentioned in our main paper, data augmentation can also be used to enhance rotation robustness. 
To test the effect of rotation data augmentation, we train AGC-LSTM and QAGC-LSTM-RInv on an augmented NTU dataset, whose augmented samples are rotated randomly around $y$-axis\footnote{Here $y$-axis is the axis parallel with actor's spine.} and test them on the testing set with arbitrary rotations ($i.e.$, DA/AR). 
Table~\ref{tab:da} summarizes the results. 
We can find that applying QPU is more effective than applying data augmentation on enhancing rotation robustness --- the gains of testing accuracy caused by using QPU is much higher than those caused by training with augmented data.

\begin{table}[h]
  \caption{Comparisons on the gain of testing accuracy (\%).}
  \vspace{-8pt}
  \centering
  \begin{tabular}{l|cc|c}\hline\hline
    {Model}         
    &AR          
    &DA/AR   
    &Gain from DA
    \\\hline
    AGC-LSTM        
    &55.17 
    &66.64 
    &+11.47\\
    QAGC-LSTM-RInv  
    &\textbf{89.92} 
    &\textbf{90.07} 
    &+0.15
    \\\hline
    Gain from QPU
    &\textbf{+34.75}
    &\textbf{+23.43}
    \\\hline\hline
  \end{tabular}
  \label{tab:da}
\end{table}


\section{Training Procedures of Models}
We show in this section the training and evaluation loss of all tested models in Figure~\ref{fig:loss}, which verifies the convergence of our training method. 
The loss is derived from the experiments on the NTU dataset under the NR setting in the main paper. 
In all cases, evaluation losses of our QPU-based models, especially those with suffix ``RInv'', are comparable with those of real-valued models under the NR setting. 
Although their convergence rate is slightly slower than that of real-valued models, their robustness on rotations are much better, as we shown in our experiments.

\begin{figure}[t]
\centering
  \includegraphics[width=.7\linewidth]{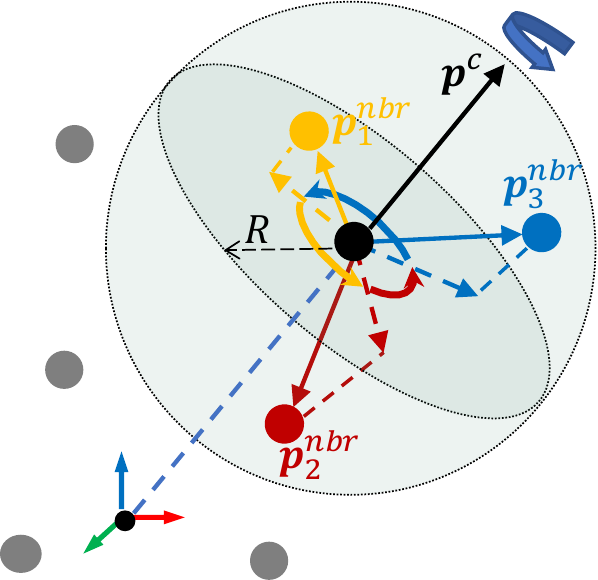}
 \caption{Illustration of neighbor points construction.
 For each centroid we group neighbor points within the radius $R$.
 $\pp^{c}$ is the vector from the origin to a centroid. 
 $\pp^{nbr}_{1,2,3}$ are vectors from the centroid to its neighbor points.
 We sort them cyclically by the clockwise rotation order of their projections on the plane orthogonal to $\pp^c$.
 In this case, $\pp^{nbr}_1$'s next point is $\pp^{nbr}_2$, $\pp^{nbr}_2$'s next point is $\pp^{nbr}_3$ and $\pp^{nbr}_3$'s next point is $\pp^{nbr}_1$.
 }
  \label{fig:pc}
\end{figure}

\section{Other Potential Applications}

We further test our QPU for rotation-invariant point cloud classification on the  ModelNet40 dataset~\citesupp{supp:wu20153d}, which contains normalized point clouds sampled from 40 types of CAD model. 
Each point cloud contains 1024 points. 
Following the Pointnet++ in~\citesupp{supp:qi2017pointnet++,supp:pytorchpointnet++}, we first sample 256 centroids for each point cloud. 
Then, for each centroid, we search 32 neighbor points in a fixed radius of 0.4. 
Accordingly, the relative 3D coordinates of each neighbor point with respect to its centroid, $i.e.$, $\pp^{nbr} =[x, y, z]_{\text{neighbor}} - [x,y,z]_{\text{centroid}}$, is represented as a quaternion-based 3D rotation, $i.e.$, $q = [\cos(\theta), \sin(\theta) \uu]$ by letting $\theta = \frac{\pi}{2}\cdot\frac{\norm{\pp^{nbr}}}{R}$ and $\uu = \frac{\pp^{nbr}}{\norm{\pp^{nbr}}}$, where $R$ is the searching radius.
Inspire by~\citesupp{supp:chen2019clusternet}, we cyclically sort each set of the neighbor points according to their rotation order around the vector from the origin to the centroid. 
Then for each point we concatenate its quaternion with the next 7 consecutive quaternions in the sorting so that each point contains information of its local structure.
Figure~\ref{fig:pc} illustrates the data preprocessing steps mentioned above.

For each point cloud, we take its quaternions as input and replace the first Set Abstration layer in the Pointnet++~\citesupp{supp:qi2017pointnet++, supp:pytorchpointnet++} with a QMLP module.
Specifically, the QMLP contains a QPU-based FC layer with 8$\times$4 input channels and 64 output channels and a real-valued FC layer with 64 input channels and 128 output channels. 
For each set of neighbor points, we pass its quaternion through this QMLP.
Two variants are tested, the first one is the model without rotation-invariance where we keep both the real parts and the imaginary parts of the outputs of QPU-based FC layer, the second one is the model with rotation-invariance where we multiply the output channels of QPU-based FC layer by 4 and only keep the real parts of the outputs of QPU-bsed FC layers.
We use a max-pooling layer to aggregate the real-valued outputs as the feature of the corresponding centroid. 
Finally, we obtain the representation of the point cloud by passing these centroids' features through two other Set Abstraction layers, each of which is composed of a MLP and a max-pooling layer. 

We train our models without rotation augmentation and test them under two settings: $i$) without arbitary rotation (NR), $ii$) with arbitary rotation (AR).
Using these features, we achieve 80.1\% test accuracy for the rotation-invariant model under both settings.  
For the model without rotation-invariance we achieve 90.0\% test accuracy under NR setting and 21.4\% under AR setting.
These results prove that our QPU together with our proposed point cloud representation can learn efficiently from point cloud data and that rotation-invariant classification can be achieved by keeping the real part of QPU's output.
We also tested our rotation-invariant model without the rotation ordering mentioned above and only achieve 75.9\% accuracy under both NR and AR settings.
This shows that the proposed rotation sorting is important for QPU to capture the structural information in neighbor point clouds.
We believe the accuracy can be further improved with more dedicated network architecture.
We also see QPU's potential in point cloud pose estimation.

{\small
\bibliographystylesupp{ieee_fullname}
\bibliographysupp{supp/ref_supp}
}


\end{document}